\newcolumntype{L}[1]{>{\raggedright\let\newline\\\arraybackslash\hspace{0pt}}m{#1}}
\newcolumntype{C}[1]{>{\centering\let\newline\\\arraybackslash\hspace{0pt}}m{#1}}
\newcolumntype{R}[1]{>{\raggedleft\let\newline\\\arraybackslash\hspace{0pt}}m{#1}}
\newcommand{\etal}{\textit{et al}.}
\newcommand{\ie}{\textit{i}.\textit{e}.}
\newcommand{\eg}{\textit{e}.\textit{g}.}
\DeclareMathOperator*{\argmin}{arg\,min}
\def\R{\mathbb{R}}
\newtheorem{lemma}{Lemma}
\renewcommand*{\ALG@name}{Procedure}
\long\def\comment#1{}
\begin{document}
%
\title{Image Quality Assessment: Unifying Structure and Texture Similarity}
%
%
%

\author{Keyan~Ding,
        Kede~Ma,~\IEEEmembership{Member,~IEEE,}
        Shiqi~Wang,~\IEEEmembership{Member,~IEEE,}
        and~Eero P.~Simoncelli,~\IEEEmembership{Fellow,~IEEE}
\IEEEcompsocitemizethanks{\IEEEcompsocthanksitem Keyan Ding, Kede Ma, and Shiqi Wang are with the Department of Computer Science, City University of Hong Kong, Kowloon, Hong Kong (e-mail:
keyan.ding@my.cityu.edu.hk, kede.ma@cityu.edu.hk, shiqwang@cityu.edu.hk).
\IEEEcompsocthanksitem Eero P. Simoncelli is with the Flatiron Institute of the Simons Foundation, and the Center for Neural Science and the Courant Institute of Mathematical Sciences, New York University, New York, NY 10003, USA (e-mail: eero.simoncelli@nyu.edu).}
}

%
%


\markboth{IEEE Transactions on Pattern Analysis and Machine Intelligence}%
{Shell \MakeLowercase{\textit{et al.}}: Bare Demo of IEEEtran.cls for Journals}

%



\IEEEtitleabstractindextext{
\begin{abstract}
Objective measures of image quality generally operate by comparing pixels of a ``degraded'' image to those of the original. Relative to human observers, these measures are overly sensitive to resampling of texture regions (\eg, replacing one patch of grass with another). Here, we develop the first full-reference image quality model with explicit tolerance to texture resampling. Using a convolutional neural network, we construct an injective and differentiable function that transforms images to multi-scale overcomplete representations. We demonstrate empirically that the spatial averages of the feature maps in this representation capture texture appearance, in that they provide a set of sufficient statistical constraints to synthesize a wide variety of texture patterns. We then describe an image quality method that combines correlations of these spatial averages (``texture similarity'') with correlations of the feature maps (``structure similarity''). The parameters of the proposed measure are jointly optimized to match human ratings of image quality, while minimizing the reported distances between subimages cropped from the same texture images.  Experiments show that the optimized method explains human perceptual scores, both on conventional image quality databases, as well as on texture databases. The measure also offers competitive performance on related tasks such as texture classification and retrieval. Finally, we show that our method is relatively insensitive to geometric transformations (\eg, translation and dilation), without use of any specialized training or data augmentation. Code is available at \url{https://github.com/dingkeyan93/DISTS}.
\end{abstract}

\begin{IEEEkeywords}
Image quality assessment, structure similarity, texture similarity, perceptual optimization.
\end{IEEEkeywords}
}
\maketitle

\IEEEdisplaynontitleabstractindextext

%
\IEEEpeerreviewmaketitle

\IEEEPARstart{I}{mage} quality assessment (IQA) -- the quantification of human perception of image quality -- is a fundamental problem in both human and computational vision, and is of paramount importance in a variety of real-world applications, such as image restoration, compression, and rendering. For more than 50 years, the mean squared error (MSE) was the standard full-reference method for assessing signal fidelity and quality, and it continues to play a fundamental role in the development of signal and image processing algorithms, despite its poor correlation with human perception~\cite{wang2009mean,girod1993s}. 

A variety of proposed full-reference IQA methods  provide a better account of human perception than MSE~\cite{wang2004image,sheikh2006image,larson:011006,Laparra:17,zhang2018unreasonable,prashnani2018pieapp}, and the Structural Similarity (SSIM) index~\cite{wang2004image} has become a {\it de facto} standard in the field of image processing.
But these methods rely on alignment of the images being compared, and are thus highly sensitive to differences between images of the same texture (e.g., two different cropped regions of the same bed of pebbles). Two samples of the same texture  differ substantially in the precise arrangement of their features, while  appearing nearly the same to a human observer (see Fig.~\ref{fig:texturefailure}).
Since textured surfaces are ubiquitous in photographic images, it is important  to develop objective IQA metrics that are consistent with this aspect of perceptual similarity. Such a metric would allow the development of a new generation of image processing solutions - for example, a compression engine that statistically synthesizes texture regions rather than trying to exactly re-create the pixels of the original image~\cite{Popat1997cluster,balle2011models}.

\begin{figure*}
  \centering
    \subfloat[]{\includegraphics[height=0.2\linewidth]{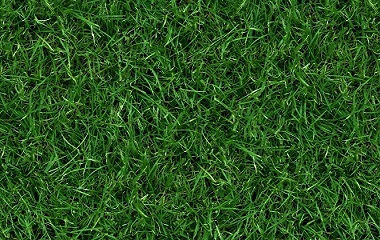}}\hskip.2em
    \subfloat[]{\includegraphics[height=0.2\linewidth]{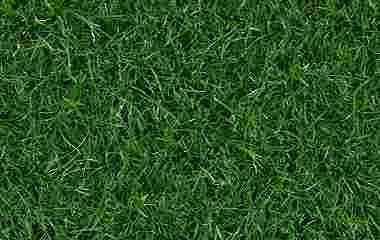}}\hskip.2em 
    \subfloat[]{\includegraphics[height=0.2\linewidth]{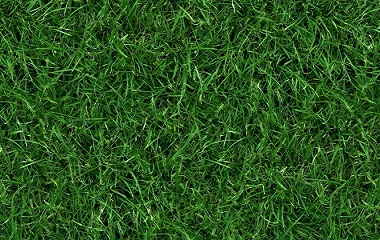}} 
  \caption{\added{Existing full-reference IQA models are overly sensitive to point-by-point deviations between images of the same texture. \textbf{(a)} A grass image and \textbf{(b)} the same image, distorted by JPEG compression. \textbf{(c)} Resampling of the same grass as in (a). Popular IQA measures, including PSNR, SSIM~\cite{wang2004image}, FSIM~\cite{zhang2011fsim}, VIF~\cite{sheikh2006image}, GMSD~\cite{xue2014gradient}, DeepIQA~\cite{bosse2018deep}, PieAPP~\cite{prashnani2018pieapp}, and LPIPS~\cite{zhang2018unreasonable}, predict that image (b) has a better perceived quality than image (c), which is in disagreement with human rating. In contrast, the proposed DISTS model makes the correct prediction. (Zoom in to improve visibility of details).}}
  \label{fig:texturefailure}
\end{figure*}

We present the first full-reference IQA method that is insensitive to resampling of visual textures. Our method is constructed by first nonlinearly transforming images to a multi-scale overcomplete representation, using a variant of the VGG convolutional neural network (CNN)~\cite{Simonyan14c}. We show that the spatial averages of the feature maps provide a compact set of statistical constraints that is sufficient to capture the visual appearance of textures~\cite{portilla2000parametric}.  Specifically, 
we use the test originally proposed by Julesz~\cite{julesz1962visual}, and demonstrate that synthesizing a new image by forcing it to match the channel averages computed from a given texture image results in an image of similar visual appearance. Although the number of statistics in the set is substantially smaller than that of pixels in the image, we find that the result holds for a wide variety of textures, regardless of the initialization, thus revealing the robustness of this model to adversarial examples~\cite{szegedy2013intriguing}.

After transforming the original and corrupted images, we construct our measure by combining two terms over all feature maps: one that compares the spatial averages (and thus, the texture properties) of the two images, and a second that compares the structural details. The final distortion score is computed as a weighted sum of these two terms, with the weights adjusted to match human perception of image quality and invariance to resampled texture patches. The first is achieved by comparing the responses of the model with a database of human image quality ratings. The second is achieved by minimizing the distance between pairs of patches sampled from the same texture images. We show that the resulting Deep Image Structure and Texture Similarity (DISTS) index can be transformed into a proper metric in the mathematical sense. Moreover, DISTS correlates well with human quality judgments in several independent datasets, and achieves a high degree of invariance to texture substitution. We also demonstrate competitive performance of DISTS on tasks of texture classification and retrieval. Last, we show that DISTS is insensitive to mild local and global geometric distortions~\cite{wang2005adaptive,ma2018geometric}, which may be imperceptible to the human visual system (HVS).

\section{Background}
Pioneering work on perceptual full-reference IQA dated back to the 1970s, when Mannos and Sakrison~\cite{mannos1974effects} investigated a class of visual fidelity measures in the context of rate-distortion optimization. 
A number of alternative models were subsequently proposed~\cite{daly1992visible,teo1994perceptual}, each mimicking certain functionalities of the HVS and penalizing the errors between the reference and distorted images ``perceptually''. However, the HVS is a complex and highly nonlinear system~\cite{wandell1995foundations}, and most IQA measures within the error visibility framework rely on strong assumptions and simplifications (\eg, linear or quasi-linear models for early vision characterized by restricted visual stimuli), and exhibit shortcomings regarding the definition of visual quality, quantification of suprathreshold distortions, and generalization to natural images~\cite{wang2006modern}. The SSIM index~\cite{wang2004image} introduced the concept of comparing structure similarity (instead of measuring error visibility), opening the door to a new class of full-reference IQA measures~\cite{wang2005adaptive,wang2005translation,zhang2011fsim,xue2014gradient}. Other design methodologies for knowledge-driven IQA include information-theoretic criterion~\cite{sheikh2006image} and perception-based pooling~\cite{wang2011information}.  Recently, there has been a surge of interest in leveraging advances in large-scale optimization to develop data-driven IQA measures~\cite{ma2018geometric,zhang2018unreasonable,bosse2018deep,prashnani2018pieapp}. However, databases of human quality scores are often insufficiently rich to constrain the large number of model parameters. As a result, these learned methods are at risk of over-fitting~\cite{ma2019group}. 

Nearly all knowledge-driven full-reference IQA models base their quality measurements on point-by-point comparisons between pixels or convolution responses (\eg, wavelets). As such, they are not capable of handling ``visual textures'', which are loosely defined as spatially homogeneous regions with repeated
elements, often subject to some randomization in their
location, size, color, and orientation~\cite{portilla2000parametric}. Different images of the same texture can look nearly the same to the human eye, while differing substantially at the level of pixel intensities. Research on visual texture has a long history, and can be partitioned into four problems: texture classification, texture segmentation, texture synthesis, and shape from texture. At the core of texture analysis is an efficient description (\ie, representation) that matches human perception of visual textures.  
In this paper, we aim to measure perceptual texture similarity, a goal first elucidated and explored in~\cite{clarke2011perceptual,zujovic2013structural}.

The response amplitudes and variances of computational texture features (\eg, Gabor basis functions~\cite{manjunath1996texture}, local binary patterns~\cite{ojala2002multiresolution}) have achieved good performance for texture classification, but are not well correlated with human perceptual ratings of texture similarity~\cite{clarke2011perceptual,zujovic2013structural}. Texture representations that incorporate more sophisticated statistical features, such as correlations of complex wavelet coefficients~\cite{portilla2000parametric}, have shown significantly more power for texture synthesis, suggesting that they may provide a good substrate for similarity measures. In recent years, the use of such statistics extracted from CNN-based representations~\cite{gatys2015texture,Zhang2017Deepten,gao2019perception} has led to even richer texture description.

\section{The DISTS Index} \label{sec:method}

Our goal is to develop a new full-reference IQA model that combines sensitivity to structural distortions (\eg, artifacts due to noise, blur, or compression) with a tolerance of texture resampling (exchanging the content of a texture region with a new sample of the same texture).  
As is common in many IQA methods, we first transform the reference and distorted images to a new representation, using a CNN.  Within this representation, we develop a set of measurements that are sufficient to capture the appearance of a variety of different visual textures.  Finally, we combine these texture parameters with global structural measurements to form an IQA measure.

\subsection{Initial Transformation}
Our model is built on an initial transformation, $f:\mathbb{R}^n\mapsto \mathbb{R}^r$, that maps the reference and distorted images ($x$ and $y$, respectively) to ``perceptual'' representations ($\tilde{x}$ and $\tilde{y}$, respectively). The primary motivation is that perceptual distances are non-uniform in the pixel space~\cite{wang2008maximum,Berardino17c}, and this is the main reason that MSE is inadequate as a perceptual IQA model.  The purpose of function $f$ is to transform the pixel representation to a space that is more perceptually uniform.
Previous IQA methods have used filter banks to capture the frequency-dependence of error visibility~\cite{daly1992visible,larson:011006}.  Others have used transformations that mimic the early visual system \cite{teo1994perceptual,malo2005nonlinear,laparra2010divisive,laparra2016perceptual}.  More recently, deep CNNs have shown surprising power in representing perceptual image distortions~\cite{zhang2018unreasonable,bosse2018deep,prashnani2018pieapp}. In particular, Zhang \etal~\cite{zhang2018unreasonable} have demonstrated that pre-trained deep features from VGG can be used as a substrate for quantifying perceptual quality.

As such, we also chose to base our model on the VGG16 CNN~\cite{Simonyan14c}, pre-trained for object recognition~\cite{krizhevsky2012imagenet} on the ImageNet database~\cite{deng2009imagenet}. 
The VGG transformation is constructed by a feedforward cascade of layers, each including spatial convolution, halfwave rectification, and downsampling.  All operations are continuous and differentiable, both advantageous for an IQA method that is to be used in optimizing image processing systems. We modified the VGG architecture to achieve two additional desired properties. 
First, in order to provide a good substrate for the invariances needed for texture resampling, we wanted the initial transformation to be {\em aliasing-free}.  The ``max pooling'' operation of the original VGG architecture has been shown to introduce visible aliasing artifacts when used to interpolate between images with geodesic sequences~\cite{henaff2015geodesics}. To avoid aliasing when subsampling by a factor of two, the Nyquist theorem requires blurring with a filter whose
cutoff frequency is below $\frac{\pi}{2}$ radians/sample~\cite{oppenheim1999discrete}. Following this principle, we replaced all max pooling layers in VGG with weighted $\ell_2$
pooling~\cite{henaff2015geodesics}:
\begin{align}
P(x) = \sqrt{g*(x\odot x)},
\label{eq:l2pool}
\end{align}
where $\odot$ denotes pointwise product, and the blurring kernel $g(\cdot)$ was implemented
by a Hanning window that approximately enforces the Nyquist criterion \added{with a stride of $2$}. As additional motivation, we note that $\ell_2$ pooling has been used to describe the behavior of complex cells in primary visual cortex~\cite{vintch2015convolutional}, and is also closely related to the complex modulus used in the scattering transform~\cite{bruna2013invariant}.

A second desired property for our transformation is that it should be {\em injective}: distinct inputs should map to distinct outputs.  This is necessary to ensure that the final quality measure is a proper metric (in the mathematical sense) - if the representation of an image is non-unique, then equality of the output representations will not imply equality of the input images. This property has proven useful in perceptual optimization, although it is not present in many recent methods. 
\comment{Earlier IQA measures such as MSE and SSIM used an injective transformation (in fact, the identity mapping), but many more recent methods do not.}
For example, the mapping function in GMSD~\cite{xue2014gradient}
extracts image gradients, discarding local luminance information that is essential to human perception of image quality. Similarly, GTI-CNN~\cite{ma2018geometric}, makes deliberate use of a surjective transformation, in an attempt to achieve invariance to mild geometric transformations, but throws away a substantial amount of structural information that is perceptually important.

Considerable effort has been made in developing invertible CNN-based transformations 
in the context of density modeling~\cite{dinh2014nice,balle17a,kingma2018glow,behrmann19Invertible}. These methods place strict constraints on either network architectures~\cite{dinh2014nice,kingma2018glow} or network parameters~\cite{behrmann19Invertible}, which limit the expressiveness in learning quality-relevant representations. Ma \etal~\cite{ma2018invertibility} proved that under Gaussian-distributed random weights and ReLU nonlinearity, a two-layer CNN is injective provided that it is sufficiently expansive (\ie, the output dimension of each layer should increase by at least a logarithmic factor). Although  mathematically appealing, this result does not constrain parameter settings of CNNs of more than two layers. In addition, a Gaussian-weighted  CNN is less likely to be perceptually relevant~\cite{gatys2015texture,ma2018geometric}. 

Like most CNNs, VGG discards information at each stage of transformation. To ensure an injective mapping, we simply included the input image as an additional feature map (the ``zeroth'' layer of the network).  
The representation then consists of the input image $x$,  concatenated with the convolution responses of five VGG layers (labelled $\text{conv1}\_\text{2}$, $\text{conv2}\_\text{2}$, $\text{conv3}\_\text{3}$, $\text{conv4}\_\text{3}$, and $\text{conv5}\_\text{3}$):
\begin{align}
    f(x) = \{\tilde{x}^{(i)}_j; i = 0,\ldots,m; j = 1,\ldots,n_i\}, \label{eq:vggfx}
\end{align}
where $m=5$ denotes the number of convolution layers chosen to construct $f$, $n_i$ is the number of feature maps in the $i$-th convolution layer, and $\tilde{x}^{(0)} = x$. 
Similarly, we also computed the representation of the distorted image:
\begin{align}
    f(y) = \{\tilde{y}^{(i)}_j; i = 0,\ldots,m; j = 1,\ldots,n_i\}.  \label{eq:vggfy}
\end{align}

\added{We used a na\"{i}ve task --  reference image recovery -- to visually demonstrate the necessity of injective feature transformations. Specifically, given an original image $x$ and an initial image $y_0$, we aim to recover $x$ by numerically optimizing $y^\star =\mathop{\arg\min}_{y} D(x,y)$, 
where $D$ denotes a full-reference IQA measure with a lower score indicating higher predicted quality, and $y^\star$ is the recovered image. For example, if $D$ is the MSE, the (trivial) analytical solution is $y^\star = x$, indicating full recoverability. For the majority of existing IQA models, which are continuous and differentiable, solutions must be sought numerically, using gradient-based iterative solvers.} Fig.~\ref{fig:optim} 
 shows the recovery results of our method from a JPEG-corrupted copy of the original image and a white Gaussian noise image, respectively, in comparison to three state-of-the-art models: GTI-CNN~\cite{ma2018geometric}, GMSD~\cite{xue2014gradient}, and LPIPS~\cite{zhang2018unreasonable}. The first two, which are based on surjective mappings, fail dramatically on this simple task when initialized with purely white Gaussian noise. LPIPS, which is built on VGG but with no enforcement of the injective property, recovers most structures and details, but leaves some visible artifacts in the converged image (Fig. \ref{fig:optim} (j)).
 In contrast, DISTS successfully recovers the reference image from any initialization.

\begin{figure*}
  \centering
  \begin{tabular}{cccccc}
    \multirow{2}{*}[12pt]{
    \subfloat[Reference]{\includegraphics[height=0.15\linewidth]{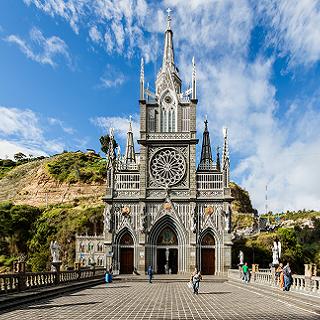}}} &
    
    \subfloat[Initial]{\includegraphics[height=0.15\linewidth]{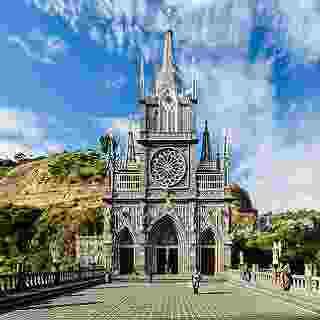}} \hskip.2em
    \subfloat[GTI-CNN~\cite{ma2018geometric}]{\includegraphics[height=0.15\linewidth]{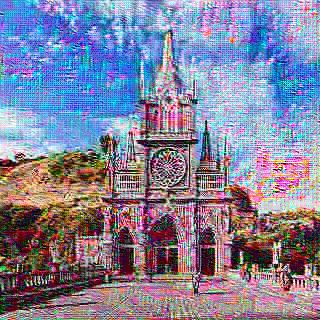}}  \hskip.2em
    \subfloat[GMSD~\cite{xue2014gradient}]{\includegraphics[height=0.15\linewidth]{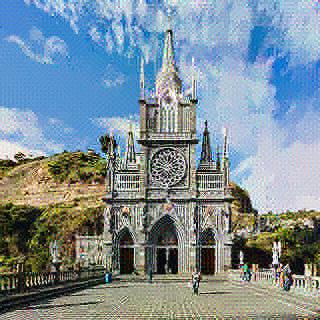}} \hskip.2em
    \subfloat[LPIPS~\cite{zhang2018unreasonable}]{\includegraphics[height=0.15\linewidth]{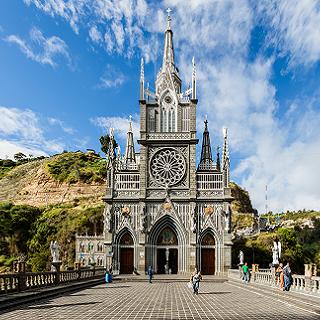}} \hskip.2em
    \subfloat[DISTS (ours)]{\includegraphics[height=0.15\linewidth]{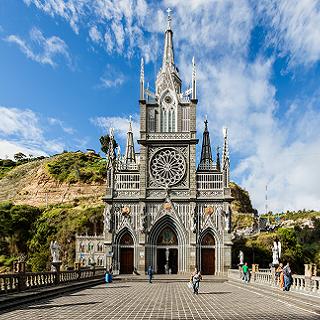}} \\ &
    
    \subfloat[Initial]{\includegraphics[height=0.15\linewidth]{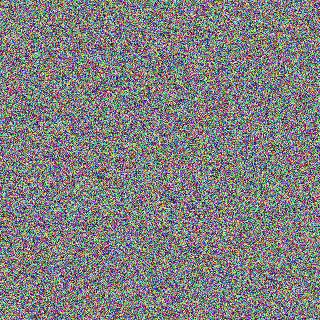}} \hskip.2em
    \subfloat[GTI-CNN~\cite{ma2018geometric}]{\includegraphics[height=0.15\linewidth]{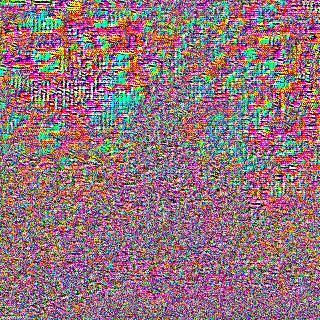}}  \hskip.2em
    \subfloat[GMSD~\cite{xue2014gradient}]{\includegraphics[height=0.15\linewidth]{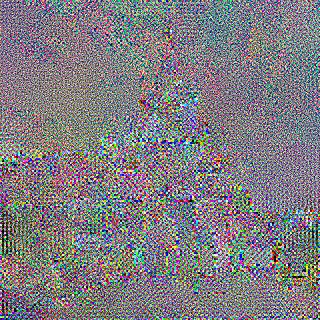}} \hskip.2em
    \subfloat[LPIPS~\cite{zhang2018unreasonable}]{\includegraphics[height=0.15\linewidth]{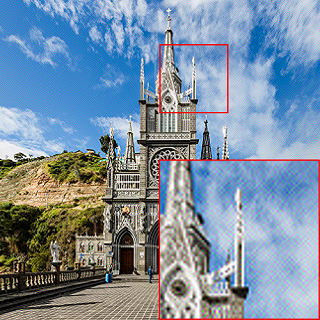}} \hskip.2em
    \subfloat[DISTS (ours)]{\includegraphics[height=0.15\linewidth]{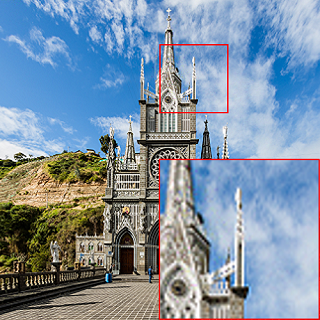}} 
        
    \end{tabular}
  \caption{\added{Recovery of a reference image by optimization of IQA measures. Recovery is implemented by solving $y^\star =\argmin_y D(x,y)$ with gradient descent, where $D$ is an IQA distortion measure and $x$ is a given reference image. \textbf{(a)} Reference image. \textbf{(b)} Corrupted initial image $y_0$, obtained by compressing the reference image using JPEG at a low bitrate. 
  \textbf{(c)-(f)} Images recovered from (b) by optimizing different metrics (as indicated). 
  \comment{by 
  GTI-CNN~\cite{ma2018geometric}, GMSD~\cite{xue2014gradient}, LPIPS~\cite{zhang2018unreasonable}, and the proposed DISTS, respectively.}
  \textbf{(g)} Corrupted initial image, obtained by adding white Gaussian noise. 
  \textbf{(h)-(k)} Images recovered from (g) by optimizing indicated metrics. 
  In all cases, the optimization converges, yielding a distortion score substantially lower than that of the initial. }}  
  \label{fig:optim}
\end{figure*}

\subsection{Texture Representation}
The visual appearance of textures is often characterized in terms of sets of local statistics~\cite{julesz1962visual} that are presumably measured by the HVS. Models consisting of various sets of features~\cite{heeger1995pyramid,zhu1998filters,portilla2000parametric,gatys2015texture} have been 
tested using synthesis: one generates an image with statistics that match those of a texture photograph.  If the set of statistical measurements is a complete description of the appearance of the texture, then the synthesized image should be perceptually indistinguishable from the original~\cite{julesz1962visual}, at least based on preattentive judgments~\cite{malik1990preattentive}. 

Portilla and Simoncelli \cite{portilla2000parametric} found that the local correlations (and other pairwise statistics) of complex wavelet responses were sufficient to capture the visual appearance of a wide variety of textures, while at the same time being of low enough dimensionality ($\sim 700$ dimensions). 
Gatys \etal \cite{gatys2015texture} used correlations across channels of many layers in a VGG network, and were able to synthesize consistently better textures, albeit with a much larger set of statistics ($\sim306$K parameters). Since 
the number of statistics is typically larger than that of pixels in the input image, it is likely that this image was unique in matching these statistics.  In this case, diversity in the synthesis results  reflects local optima of the optimization procedure, rather than the entropy of the implicitly represented probability distribution. Ustyuzhaninov~\etal~\cite{ustyuzhaninov2017does} provided more direct evidence of this hypothesis: If the number of the statistical measurements is sufficiently large (on the order of millions), a single-layer CNN with random filters can always produce textures that are visually indiscernible to the human eye.
Subsequent results suggest that a reduced set of statistics, containing only the mean and variance of CNN channels, is sufficient for texture classification or style transfer~\cite{gatys2016image,dumoulin2016learned,Li2017Demystifying}.

In our experiments, we found that an even more reduced set, containing only the spatial means of the feature maps (a total of $1,475$ statistics), provides an effective parametric model for visual textures. Specifically, we used this model to synthesize textures~\cite{portilla2000parametric} by solving
 \begin{align}
     y^\star =\argmin_y D(x,y)=\argmin_{y} \sum_{i,j}\left(\mu_{\tilde{x}_j}^{(i)}-\mu_{\tilde{y}_j}^{(i)}\right)^2,
     \label{eq:texture_syn}
 \end{align}
where $x$ is the target texture image, and $y^\star$ is the synthesized texture image, obtained by gradient descent optimization from a random initialization. $\mu_{\tilde{x}_j}^{(i)}$ and $\mu_{\tilde{y}_j}^{(i)}$ are the spatial averages of channels $\tilde{x}^{(i)}_j$ and $\tilde{y}^{(i)}_j$, respectively. 
Fig.~\ref{fig:g_mean2} shows the synthesis results of our texture model using statistical constraints from individual and combined convolution layers of the pre-trained VGG. Similar to observations in Gatys~\etal~\cite{gatys2015texture}, we found that measurements from early layers appear to capture basic intensity and color information, and those from later layers summarize the shape and structure information. When matching statistics up to layer $\text{conv5}\_\text{3}$, the synthesized texture appears visually similar to the reference.

Fig.~\ref{fig:synthesis} shows three synthesis results of our $1475$-parameter texture model in comparison with the $710$-parameter texture model of Portilla \&  Simoncelli~\cite{portilla2000parametric} and  the $\sim 306$k-parameter model of Gatys~\etal~\cite{gatys2015texture}. As one might expect, the visual quality of samples synthesized by our model lies between the other two. 

\begin{figure*}[t]
  \centering
  \begin{tabular}{cccccc}
    \multirow{2}{*}[30pt]{\subfloat[]{\includegraphics[height=0.14\linewidth]{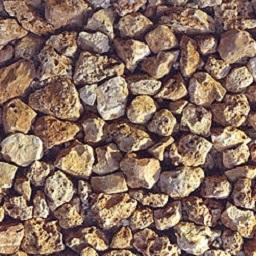}}} & 
    \subfloat[]{\includegraphics[height=0.14\linewidth]{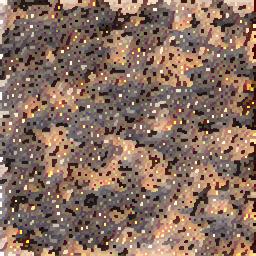}}  \hskip.2em
    \subfloat[]{\includegraphics[height=0.14\linewidth]{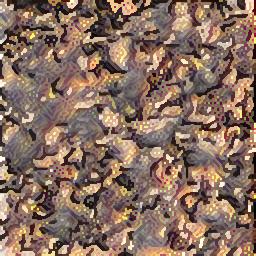}}  \hskip.2em
    \subfloat[]{\includegraphics[height=0.14\linewidth]{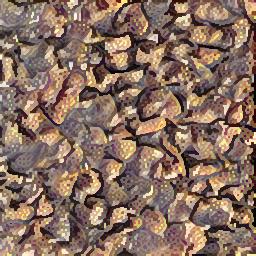}}  \hskip.2em
    \subfloat[]{\includegraphics[height=0.14\linewidth]{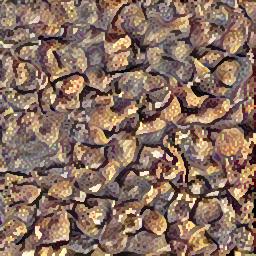}}  \hskip.2em
    \subfloat[]{\includegraphics[height=0.14\linewidth]{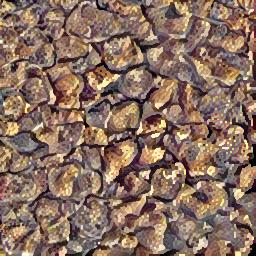}}  \\
    &
    \subfloat[]{\includegraphics[height=0.14\linewidth]{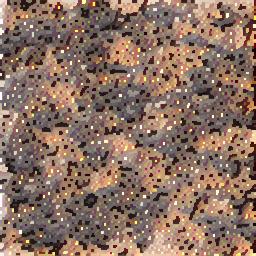}}  \hskip.2em
    \subfloat[]{\includegraphics[height=0.14\linewidth]{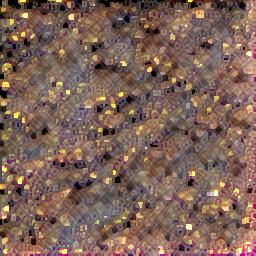}}  \hskip.2em
    \subfloat[]{\includegraphics[height=0.14\linewidth]{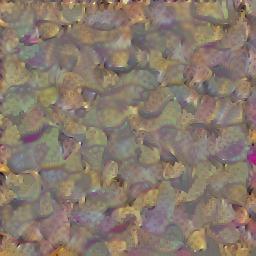}}  \hskip.2em
    \subfloat[]{\includegraphics[height=0.14\linewidth]{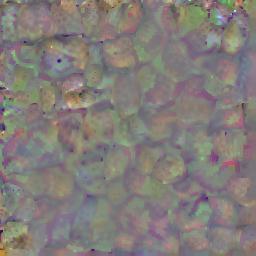}}  \hskip.2em
    \subfloat[]{\includegraphics[height=0.14\linewidth]{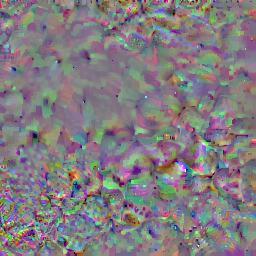}} 
    \end{tabular}
  \caption{Images synthesized to match the mean values of channels  up to a given layer (top) or from individual layers (bottom) of the pre-trained VGG network. 
  \textbf{(a)} Reference texture. 
  \textbf{(b)} Up to $\text{conv1}\_\text{2}$. 
  \textbf{(c)} Up to $\text{conv2}\_\text{2}$. 
  \textbf{(d)} Up to $\text{conv3}\_\text{3}$. 
  \textbf{(e)} Up to $\text{conv4}\_\text{3}$. 
  \textbf{(f)} Up to $\text{conv5}\_\text{3}$.  
  \textbf{(g)} Only $\text{conv1}\_\text{2}$. 
  \textbf{(h)} Only $\text{conv2}\_\text{2}$. 
  \textbf{(i)} Only $\text{conv3}\_\text{3}$. 
  \textbf{(j)} Only $\text{conv4}\_\text{3}$. 
  \textbf{(k)} Only $\text{conv5}\_\text{3}$.} 
  \label{fig:g_mean2}
\end{figure*}

 \begin{figure}
  \centering
    \subfloat{\includegraphics[height=0.24\linewidth]{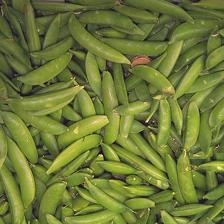}}\hskip.2em
    \subfloat{\includegraphics[height=0.24\linewidth]{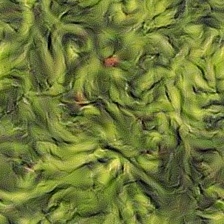}}\hskip.2em
    \subfloat{\includegraphics[height=0.24\linewidth]{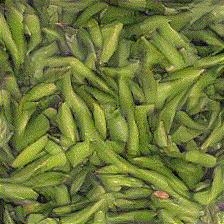}}\hskip.2em
    \subfloat{\includegraphics[height=0.24\linewidth]{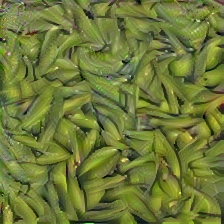}} \\
    \vspace{-.2cm}   
    \subfloat{\includegraphics[height=0.24\linewidth]{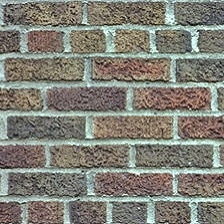}}\hskip.2em
    \subfloat{\includegraphics[height=0.24\linewidth]{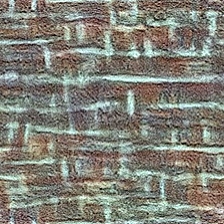}}\hskip.2em
    \subfloat{\includegraphics[height=0.24\linewidth]{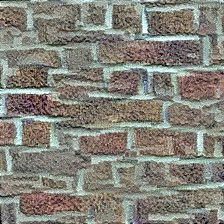}}\hskip.2em
    \subfloat{\includegraphics[height=0.24\linewidth]{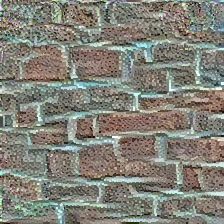}} \\
    \addtocounter{subfigure}{-8}
    \vspace{-.2cm}   
    \subfloat[]{\includegraphics[height=0.24\linewidth]{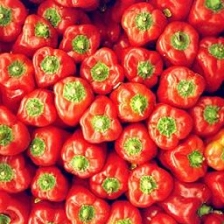}}\hskip.2em
    \subfloat[]{\includegraphics[height=0.24\linewidth]{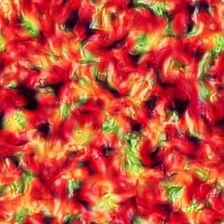}}\hskip.2em
    \subfloat[]{\includegraphics[height=0.24\linewidth]{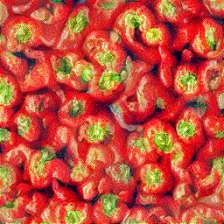}}\hskip.2em
    \subfloat[]{\includegraphics[height=0.24\linewidth]{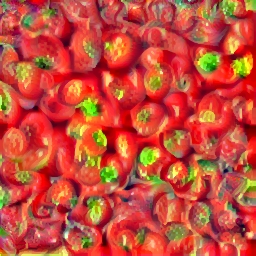}} \\
   \caption{Synthesis results for three example texture photographs. \textbf{(a)} Reference textures. \textbf{(b)} Images synthesized using the method of Portilla \& Simoncelli~\cite{portilla2000parametric}. \textbf{(c)} Images synthesized using Gatys~\etal~\cite{gatys2015texture}.   \textbf{(d)} Images synthesized using our texture model (Eq.~(\ref{eq:texture_syn})).}  
  \label{fig:synthesis}
\end{figure}

\begin{figure}
  \centering
     \subfloat[]{\includegraphics[width=0.325\linewidth]{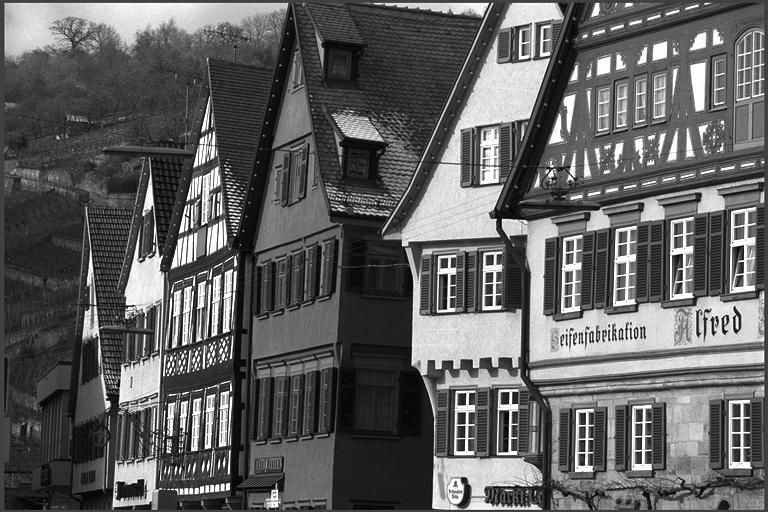}} \hskip.2em
    \subfloat[]{\includegraphics[width=0.325\linewidth]{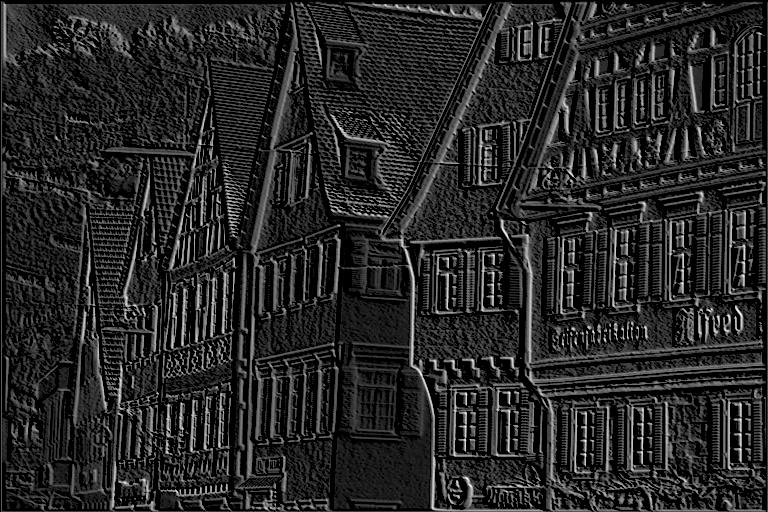}} \hskip.2em
    \subfloat[]{\includegraphics[width=0.325\linewidth]{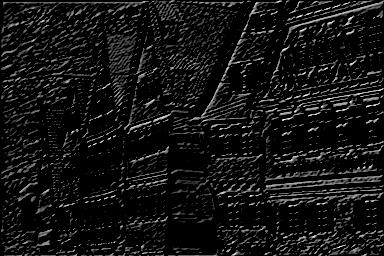}} \\ \vspace{-.25cm}
    \subfloat[]{\includegraphics[width=0.325\linewidth]{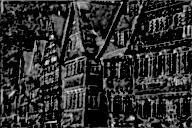}} \hskip.2em
    \subfloat[]{\includegraphics[width=0.325\linewidth]{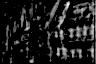}}  \hskip.2em
    \subfloat[]{\includegraphics[width=0.325\linewidth]{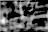}}
  \caption{Selected feature maps from the six layers of the VGG decomposition of the ``buildings'' image. \textbf{(a)} Zeroth stage (original image). \textbf{(b)} First stage. \textbf{(c)} Second stage. \textbf{(d)} Third stage. \textbf{(e)} Fourth stage. \textbf{(f)} Fifth stage. The feature map intensities are re-scaled for better visibility.} 
  \label{fig:activation}
\end{figure}

\subsection{Perceptual Distance Measure}
Next, we specified quality measurements based on $f(x)$ and $f(y)$.  Fig.~\ref{fig:activation} visualizes some feature maps of the six stages of the reference image ``Buildings''. As can been seen,  spatial structures are present at all stages, indicating strong statistical dependencies between neighbouring coefficients. Therefore, use of an $\ell_p$-norm, that assumes statistical independence of errors at different locations, is not appropriate. Inspired by the form of SSIM~\cite{wang2004image}, we defined separate quality measurements for the texture (using the global means) and the structure (using the global correlations) of each pair of corresponding feature maps:
\begin{align}
l(\tilde{x}^{(i)}_j,\tilde{y}^{(i)}_j)=\frac{2 \mu_{\tilde{x}_j}^{(i)}\mu_{\tilde{y}_j}^{(i)}+c_{1}}{\left(\mu_{\tilde{x}_j}^{(i)}\right)^{2}+\left(\mu_{\tilde{y}_j}^{(i)}\right)^{2}+c_{1}},
\label{eq:s1}
\end{align} 
\begin{align}
s(\tilde{x}^{(i)}_j,\tilde{y}^{(i)}_j)=\frac{2 \sigma_{\tilde{x}_j\tilde{y}_j}^{(i)}+c_{2}}{\left(\sigma_{\tilde{x}_j}^{(i)}\right)^{2}+\left(\sigma_{\tilde{y}_j}^{(i)}\right)^{2}+c_{2}},
\label{eq:s2}
\end{align}
where $\mu_{\tilde{x}_j}^{(i)}$, $\mu_{\tilde{y}_j}^{(i)}$, $(\sigma_{\tilde{x}_j}^{(i)})^{2}$, $(\sigma_{\tilde{y}_j}^{(i)})^{2}$, and $\sigma_{\tilde{x}_j\tilde{y}_j}^{(i)}$ represent the global means and variances of $\tilde{x}^{(i)}_j$ and $\tilde{y}^{(i)}_j$, and the global covariance between $\tilde{x}^{(i)}_j$ and $\tilde{y}^{(i)}_j$, respectively. Two small positive constants, $c_{1}$ and $c_{2}$, are included to avoid numerical instability when the denominators are close to zero. The normalization mechanisms in Eq.~(\ref{eq:s1}) and Eq.~(\ref{eq:s2}) serve to equalize the magnitudes of feature maps at different stages. 

Finally, the proposed DISTS model combines the quality measurements from different convolution layers using a weighted sum:
\begin{align}
D(x, y;\alpha,\beta)=1-\sum_{i=0}^{m}\sum_{j=1}^{n_i}\left(\alpha_{ij}l(\tilde{x}^{(i)}_j,\tilde{y}^{(i)}_j)+ \beta_{ij}s(\tilde{x}^{(i)}_j,\tilde{y}^{(i)}_j)\right),
\label{eq:vgg-metric}
\end{align}
where $\{\alpha_{ij}, \beta_{ij}\}$ are positive learnable weights, satisfying $\sum_{i=0}^{m}\sum_{j=1}^{n_i}(\alpha_{ij} +\beta_{ij})=1$.  Note that the convolution kernels are fixed throughout the development of the method. Fig.~\ref{fig:vgg} shows the full computation diagram of our quality assessment system. 

\begin{lemma}For $\forall$ $\tilde{x}^{(i)}_j,\tilde{y}^{(i)}_j \in \R^{n}_{+}$ (as is the case for responses after ReLU nonlinearity), it can be shown that
\begin{align}
    d(x,y)=\sqrt{D(x,y)}
\end{align}
is a proper metric, satisfying 
\begin{itemize}
\item non-negativity: $d(x, y) \ge 0$; 
\item symmetry: $d(x, y) = d(y, x)$; 
\item  triangle inequality: $d(x, z) \le d(x, y) + d(y, z)$;
\item  identity of indiscernibles (\ie, unique minimum): $d(x, y) = 0 \Leftrightarrow x = y$.
\end{itemize}
\end{lemma}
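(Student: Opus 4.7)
My plan is to rewrite $D$ algebraically, dispose of the easy properties quickly, and then focus effort on the triangle inequality.

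First, use the weight normalization $\sum_{i,j}(\alpha_{ij}+\beta_{ij})=1$ to rewrite
\[D(x,y)=\sum_{i,j}\bigl[\alpha_{ij}(1-l(\tilde x_j^{(i)},\tilde y_j^{(i)}))+\beta_{ij}(1-s(\tilde x_j^{(i)},\tilde y_j^{(i)}))\bigr].\]
A short algebraic check shows that each summand has the ``normalized-squared-distance'' form $\|p-q\|^2/(\|p\|^2+\|q\|^2+C)$ with $C>0$: for the texture term, $1-l=(\mu_{\tilde x_j}^{(i)}-\mu_{\tilde y_j}^{(i)})^2/((\mu_{\tilde x_j}^{(i)})^2+(\mu_{\tilde y_j}^{(i)})^2+c_1)$; for the structure term, after multiplying numerator and denominator by $n$, $1-s=\|\bar x-\bar y\|^2/(\|\bar x\|^2+\|\bar y\|^2+nc_2)$ where $\bar x=\tilde x_j^{(i)}-\mu_{\tilde x_j}^{(i)}\mathbf{1}$ is the mean-centered feature map. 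Non-negativity of each summand is then immediate (using the Cauchy--Schwarz bound $|\sigma_{uv}|\le(\sigma_u^2+\sigma_v^2)/2$ for the structure term), and symmetry in $x,y$ is manifest. For the identity of indiscernibles I would exploit that the zeroth ``layer'' is the identity map on images: $d(x,y)=0$ forces the $i=0$ texture and structure terms to vanish separately, giving $\mu_x=\mu_y$ and $x-\mu_x\mathbf{1}=y-\mu_y\mathbf{1}$, hence $x=y$; the converse is trivial.

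For the triangle inequality the plan splits into two steps. Step (A): prove that for each $C>0$, the per-summand map $d_C(p,q):=\sqrt{\|p-q\|^2/(\|p\|^2+\|q\|^2+C)}$ is itself a metric. Let $A=\|p\|^2+\|r\|^2+C$, $B=\|p\|^2+\|q\|^2+C$, $R=\|q\|^2+\|r\|^2+C$; convexity of $t\mapsto 1/\sqrt t$ yields, for any $\lambda\in[0,1]$,
\[\frac{1}{\sqrt{\lambda B+(1-\lambda)R}}\le\frac{\lambda}{\sqrt B}+\frac{1-\lambda}{\sqrt R}.\]
The idea is to pick $\lambda$ so that simultaneously (i) $\|q\|^2\le(1-\lambda)\|p\|^2+\lambda\|r\|^2$ (equivalently $A\ge\lambda B+(1-\lambda)R$), and (ii) $\lambda\|p-r\|\le\|p-q\|$ and $(1-\lambda)\|p-r\|\le\|q-r\|$; multiplying the convexity inequality by $\|p-r\|$ and substituting these two groups of bounds delivers $d_C(p,r)\le d_C(p,q)+d_C(q,r)$. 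Step (B): combine the per-summand triangle inequalities via the $\ell_2$ Minkowski inequality applied to $(\sqrt{w_k}\,d_{C_k}(p_k,q_k))_k$ and $(\sqrt{w_k}\,d_{C_k}(q_k,r_k))_k$, upgrading to $\sqrt{D(x,z)}\le\sqrt{D(x,y)}+\sqrt{D(y,z)}$.

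The main obstacle will be Step (A), specifically verifying that a feasible $\lambda\in[0,1]$ always exists. When $\|q\|^2\le\min(\|p\|^2,\|r\|^2)$, a direct monotonicity argument on the denominators (no convexity needed) combined with the Euclidean triangle inequality does the job. The harder cases are when $\|q\|$ is comparable to or larger than $\|p\|$ and $\|r\|$, where the lower bound on $\lambda$ forced by (i) may \emph{a priori} conflict with the upper bound from (ii); here I would reconcile the constraints either by decomposing $q$ into its orthogonal projection onto the line through $p$ and $r$ plus a perpendicular residual (reducing to the scalar case, which the convexity argument handles cleanly, plus a controlled correction from the residual), or, when $\|q\|$ exceeds both $\|p\|$ and $\|r\|$, by a separate Cauchy--Schwarz-type bound exploiting the reverse triangle inequalities $\|p-q\|\ge\|q\|-\|p\|$ and $\|q-r\|\ge\|q\|-\|r\|$.
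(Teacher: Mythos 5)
Your overall architecture coincides with the paper's: rewrite $D$ as a weighted sum of per-channel terms $\alpha_{ij}(1-l)+\beta_{ij}(1-s)$, argue that the square root of each summand is itself a metric, and then combine via the $\ell_2$ Minkowski inequality to obtain the triangle inequality for $\sqrt{D}$. Your Step (B) is exactly the paper's Cauchy--Schwarz step, and your identity-of-indiscernibles argument via the zeroth (identity) layer is a concrete and correct instantiation of what the paper attributes to injectivity (it does require the zeroth-stage weights to be strictly positive, which the implementation enforces by projecting them onto $[0.02,1]$).

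The genuine gap is Step (A). The paper does not prove the per-summand triangle inequality; it cites Brunet \emph{et al.}, who established that $\sqrt{1-l}$ and $\sqrt{1-s}$ (equivalently, that $\|p-q\|/\sqrt{\|p\|^2+\|q\|^2+C}$ is a metric) --- a nontrivial theorem, and precisely the piece you attempt to reprove and leave unfinished. Your $\lambda$-selection scheme demonstrably has no feasible $\lambda$ in the regime you flag: take $n=1$, $p=1$, $r=-1$, $q=10$; condition (i) reads $100\le(1-\lambda)\cdot 1+\lambda\cdot 1=1$, which fails for every $\lambda\in[0,1]$, even though the target inequality itself is true here ($2/\sqrt{2+C}\le 20/\sqrt{101+C}$). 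The two repair strategies you offer (projection of $q$ onto the line through $p$ and $r$, or reverse-triangle bounds) are not carried out, and neither obviously closes this case. As written, your triangle inequality therefore rests on an unproven lemma; you should either complete the analysis of the large-$\|q\|$ case or, as the paper does, invoke the published result of Brunet \emph{et al.}
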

\begin{proof}
 The non-negative and symmetric properties are immediately apparent. The identity of indiscernibles is guaranteed due to the injective mapping function and the use of SSIM-motivated quality measurements. To verify the triangle inequality, we first rewrite $d(x,y)$ as 
 \begin{align}
     d(x,y) = \sqrt{\sum_{i=0}^m\sum_{j=1}^{n_i}d_{ij}^2(x,y)},
 \end{align}
 where 
  \begin{align}
     d_{ij}(x,y) = \sqrt{\alpha_{ij}(1-l(\tilde{x}^{(i)}_j,\tilde{y}^{(i)}_j))+\beta_{ij}(1 - s(\tilde{x}^{(i)}_j,\tilde{y}^{(i)}_j))}.
 \end{align}
 Brunet \etal~\cite{brunet2011mathematical} have proved that $d_{ij}(x,y)$ is a metric for $\alpha_{ij}\ge 0$ and $\beta_{ij}\ge 0$. Then,
  \begin{align}
     d(x,y) &\le \sqrt{\sum_{i,j}(d_{ij}(x,z)+d_{ij}(z,y))^2}\\
     &\le\sqrt{\sum_{i,j}d_{ij}^2(x,z)}+ \sqrt{\sum_{i,j}d_{ij}^2(y,z)}\label{eq:csi}\\
     &=d(x,z) + d(z,y),
 \end{align}
 where Eq.~(\ref{eq:csi}) follows from the Cauchy–Schwarz inequality.
\end{proof}



\begin{figure}[t]
  \centering
    \includegraphics[width=0.9\linewidth]{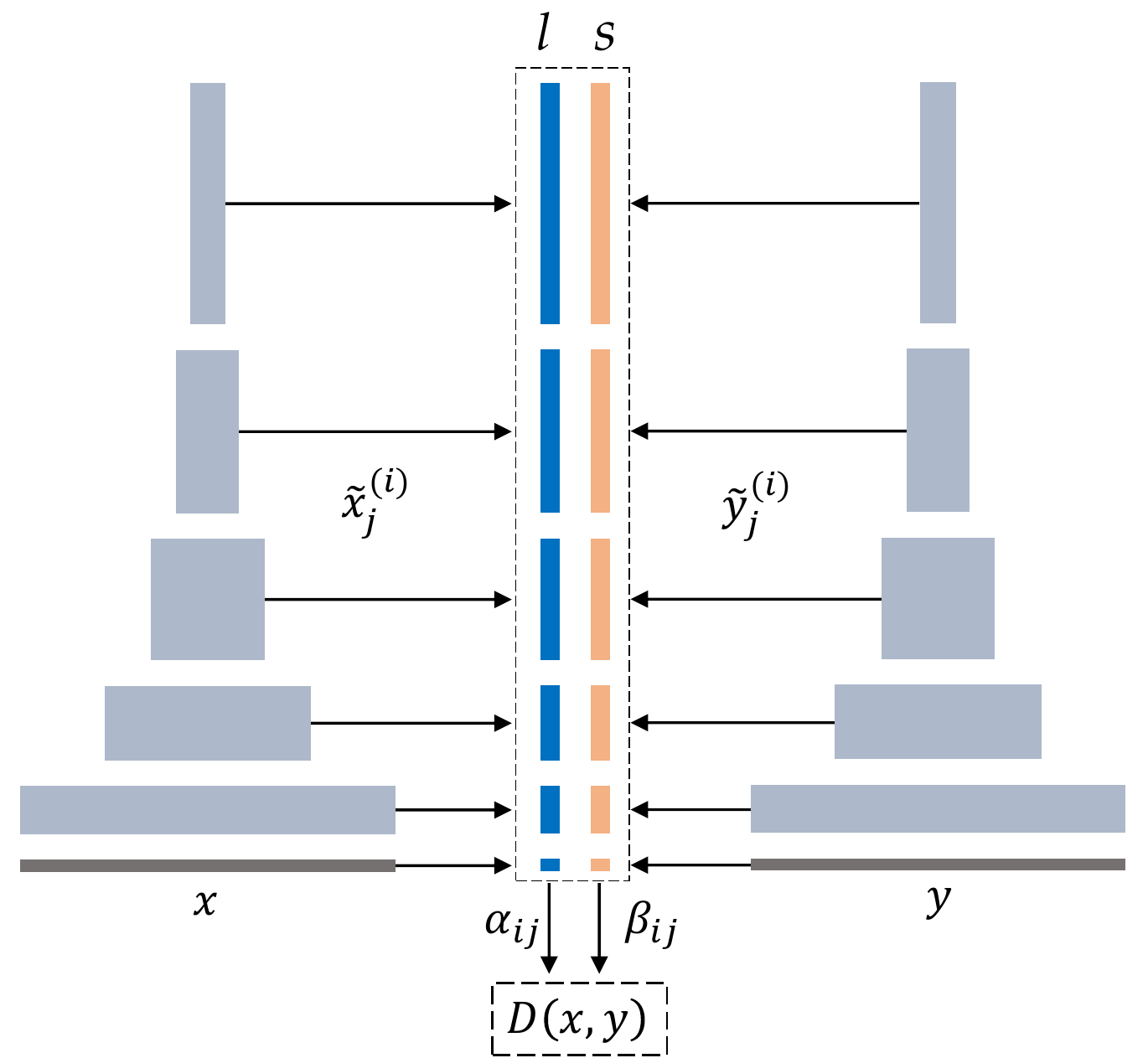}
    \vspace{-.25cm}   
  \caption{VGG-based perceptual representation for the proposed DISTS model. It contains  a total of six stages (including the zeroth stage of raw pixels), and the numbers of feature maps at each stage are $3$, $64$, $128$, $256$, $512$ and $512$, respectively. Global texture and structure similarity measurements are made at each stage, and combined with a weighted summation, giving rise to the final model defined in Eq.~(\ref{eq:vgg-metric}).}  
  \label{fig:vgg}
\end{figure}

\subsection{Model Training} \label{sec:train}
The perceptual weights $\{\alpha, \beta\}$ in Eq.~(\ref{eq:vgg-metric}) were jointly optimized for human perception of image quality and texture invariance. Specifically, for image quality, we minimized the absolute error between model predictions and human ratings:
\begin{align}
E_1(x,y;
\alpha,\beta)=\vert D(x, y;
\alpha,\beta)-q(y))\vert,
\label{eq:loss1}
\end{align}
where $q(y)$ denotes the normalized ground-truth quality score of $y$ collected from psychophysical experiments. We chose the large-scale IQA dataset KADID-10k~\cite{lin2019kadid} as the training set, which contains $81$ reference images, each of which is distorted by $25$ distortion types at $5$ distortion levels. In addition, we explicitly enforced the model to be invariant to texture substitution in a data-driven fashion. We minimized the distance (measured by Eq.~(\ref{eq:vgg-metric})) between two patches $(z_1, z_2)$ sampled from the same texture image $z$:
\begin{align}
E_{2}(z;
\alpha,\beta)=D(z_1, z_2;
\alpha,\beta).
\label{eq:loss2}
\end{align}
We selected texture images from the describable textures dataset (DTD)~\cite{cimpoi2014describing},  consisting of $5,640$ images ($47$ categories and $120$ images for each category). In practice, we randomly sampled two minibatches $\mathcal{Q}$ and $\mathcal{T}$ from   KADID-10k and DTD, respectively, and used a variant of stochastic gradient descent to adjust the parameters $\{\alpha,\beta\}$:
\begin{align}
   E(\mathcal{Q},\mathcal{T};\alpha,\beta) = \frac{1}{\vert\mathcal{Q}\vert} \sum_{x,y\in \mathcal{Q}}E_{1}(x,y;
\alpha,\beta) +\lambda\frac{1}{\vert\mathcal{T}\vert} \sum_{z\in \mathcal{T}}E_{2}(z;
\alpha,\beta)
\label{eq:loss}
\end{align}
where $\lambda$ governs the trade-off between the two terms.  

\begin{table*}[t]
  \centering
  \caption{ Performance comparison on three standard IQA databases. Larger PLCC, SRCC and KRCC values indicate better performance. CNN-based methods are highlighted in italics}
  \setlength{\tabcolsep}{2.2mm}{
    \begin{tabular}{lccccccccc}
    \toprule
      \multirow{2}{*}[-3pt]{Method} & \multicolumn{3}{c}{LIVE~\cite{LIVE}}&\multicolumn{3}{c}{CSIQ~\cite{larson:011006}}&\multicolumn{3}{c}{TID2013~\cite{Ponomarenko201557}}\\ 
      \cmidrule(lr){2-4} \cmidrule(lr){5-7} \cmidrule(lr){8-10} 
      &PLCC & SRCC & KRCC  & PLCC & SRCC & KRCC  & PLCC & SRCC & KRCC  \\ \hline 
     PSNR  & 0.865 & 0.873 & 0.680  & 0.819 & 0.810 & 0.601  & 0.677 & 0.687 & 0.496\\
     SSIM~\cite{wang2004image}  & 0.937 &0.948 & 0.796 & 0.852 & 0.865 & 0.680 & 0.777 & 0.727 & 0.545\\
     MS-SSIM~\cite{wang2003multiscale}  &  0.940 & 0.951 & 0.805  &  0.889 & 0.906 & 0.730 & 0.830 & 0.786 &  0.605\\
     VSI~\cite{zhang2014vsi}  & 0.948 & 0.952 & 0.806 & 0.928 & 0.942 & 0.786  & \textbf{0.900} & \textbf{0.897} & \textbf{0.718}\\
     MAD~\cite{larson:011006}  & \textbf{0.968} & \textbf{0.967} & \textbf{0.842}  & \textbf{0.950} & \textbf{0.947} & \textbf{0.797} & 0.827 & 0.781 & 0.604\\
     VIF~\cite{sheikh2006image}  & 0.960 & 0.964 & 0.828 & 0.913 & 0.911 & 0.743 &  0.771   & 0.677 &  0.518\\
     FSIM$_\mathrm{c}$~\cite{zhang2011fsim}  & \textbf{0.961} &\textbf{0.965} & \textbf{0.836} &	0.919 & 0.931 &	0.769  & \textbf{0.877} & 0.851 & 0.667\\
     NLPD~\cite{laparra2016perceptual}  & 0.932 & 0.937 & 0.778 & 0.923 & 0.932 & 0.769 & 0.839 &  0.800  & 0.625\\
     GMSD~\cite{xue2014gradient}  & 0.957 & 0.960 & 0.827 & \textbf{0.945}  &  \textbf{0.950} & \textbf{0.804} & 0.855 & 0.804 & 0.634\\
     \hline
     \textit{DeepIQA}~\cite{bosse2018deep}  & 0.940 & 0.947 & 0.791 & 0.901 & 0.909 & 0.732 & 0.834 & 0.831  & 0.631\\
     \textit{PieAPP}~\cite{prashnani2018pieapp}  & 0.908 & 0.919 & 0.750 & 0.877 & 0.892 & 0.715 & 0.859 & \textbf{0.876} & \textbf{0.683}\\
     \textit{LPIPS}~\cite{zhang2018unreasonable}  & 0.934 & 0.932 & 0.765 & 0.896 & 0.876 & 0.689 & 0.749 & 0.670 & 0.497\\
     \hline
     \textit{DISTS (ours)} & 0.954 & 0.954 & 0.811 & 0.928 & 0.929 & 0.767& 0.855 & 0.830 & 0.639\\
    \bottomrule
    \end{tabular}}
  \label{tab:iqa_database}
\end{table*}

\subsection{Connections to Other Full-Reference IQA Methods}
The proposed DISTS model has a close relationship to a number of existing IQA methods.
\begin{itemize}
\item \textit{SSIM and its variants}~\cite{wang2004image,wang2003multiscale,wang2005translation}: The multi-scale extension of SSIM~\cite{wang2003multiscale} incorporates the variations of viewing conditions in IQA, and calibrates the cross-scale parameters via subjective testing on artificially synthesized images. Our model follows a similar approach, building on a multi-scale hierarchical representation and directly calibrating cross-scale parameters (\ie, $\alpha,\beta$) using subject-rated natural images with various distortions. The extension of SSIM into the complex wavelet domain~\cite{wang2005translation} gains invariance to small geometric transformations by measuring relative phase patterns of the wavelet coefficients. As we show in Section~\ref{sec:geo}, by optimizing for texture invariance, DISTS inherits insensitivity to mild geometric transformations. It is worth noting that unlike SSIM and its variants, DISTS is based on global spatial statistics, and thus does not provide a spatial map of quality.
\item \textit{The adaptive linear system framework}~\cite{wang2005adaptive} decomposes the distortion between
two images into a linear combination of components that are adapted to local image structures, separating structural and non-structural distortions. It generalizes many IQA models, including MSE, space/frequency weighting~\cite{mannos1974effects,watson1997visibility}, transform domain masking~\cite{teo1994perceptual}, and the tangent distance~\cite{simard1998transformation}. 
DISTS can be seen as an adaptive nonlinear system, where structure comparison  captures structural distortions, and texture comparison measures non-structural distortions, with basis functions adapted to global image content. 
\item \textit{Style and content separation}~\cite{gatys2016image} based on the pre-trained VGG network has reignited the field of style transfer. Specifically, the style loss is built upon the correlations between convolution responses at the same stages (\ie, the Gram matrix) while the content loss is defined by the MSE between the two representations. These two components are redundant, and the combined loss does not have the desired property of unique minima we seek. 
\comment{By incorporating the input image as the zeroth stage feature representation of VGG and making SSIM-inspired quality measurements, the square root of DISTS is a valid metric. }
\item \textit{Image restoration losses}~\cite{johnson2016perceptual} in the era of deep learning are typically defined as a weighted sum of $\ell_p$-norm distances computed on the raw pixels and several stages of VGG feature maps, where the weights are manually tuned for tasks at hand. Later stages of the VGG representation are often preferred so as to incorporate image semantics into low-level vision, encouraging perceptually meaningful details that are not necessarily aligned with the underlying image. This type of loss does not achieve the level of texture invariance we are looking for. \comment{Moreover, the weights of DISTS are jointly optimized for image quality and texture invariance, and can be used across multiple low-level vision tasks.}
\end{itemize}

\begin{figure*}[t]
  \centering
  \begin{tabular}{ccc}
    \multirow{2}{*}[120pt]{\subfloat{\includegraphics[height=0.45\linewidth]{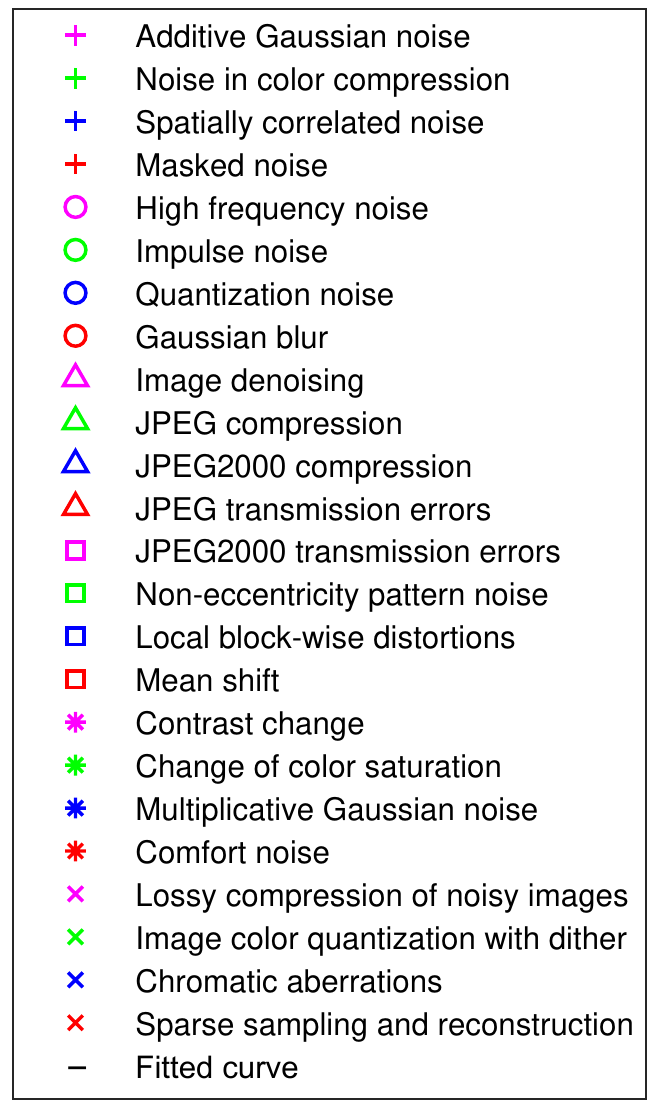}}} & 
    \subfloat{\includegraphics[height=0.27\linewidth]{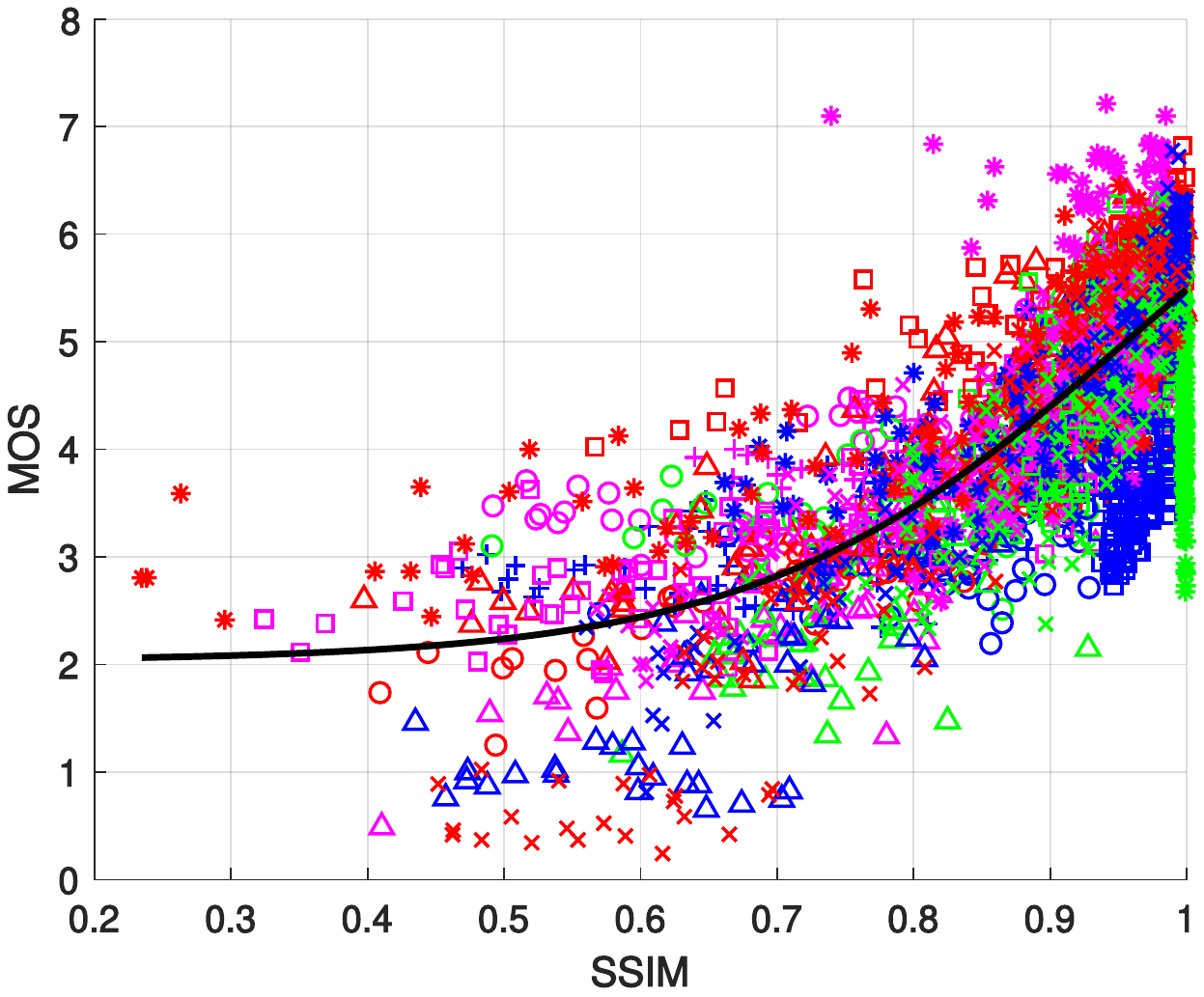}}  \hskip.2em
    \subfloat{\includegraphics[height=0.27\linewidth]{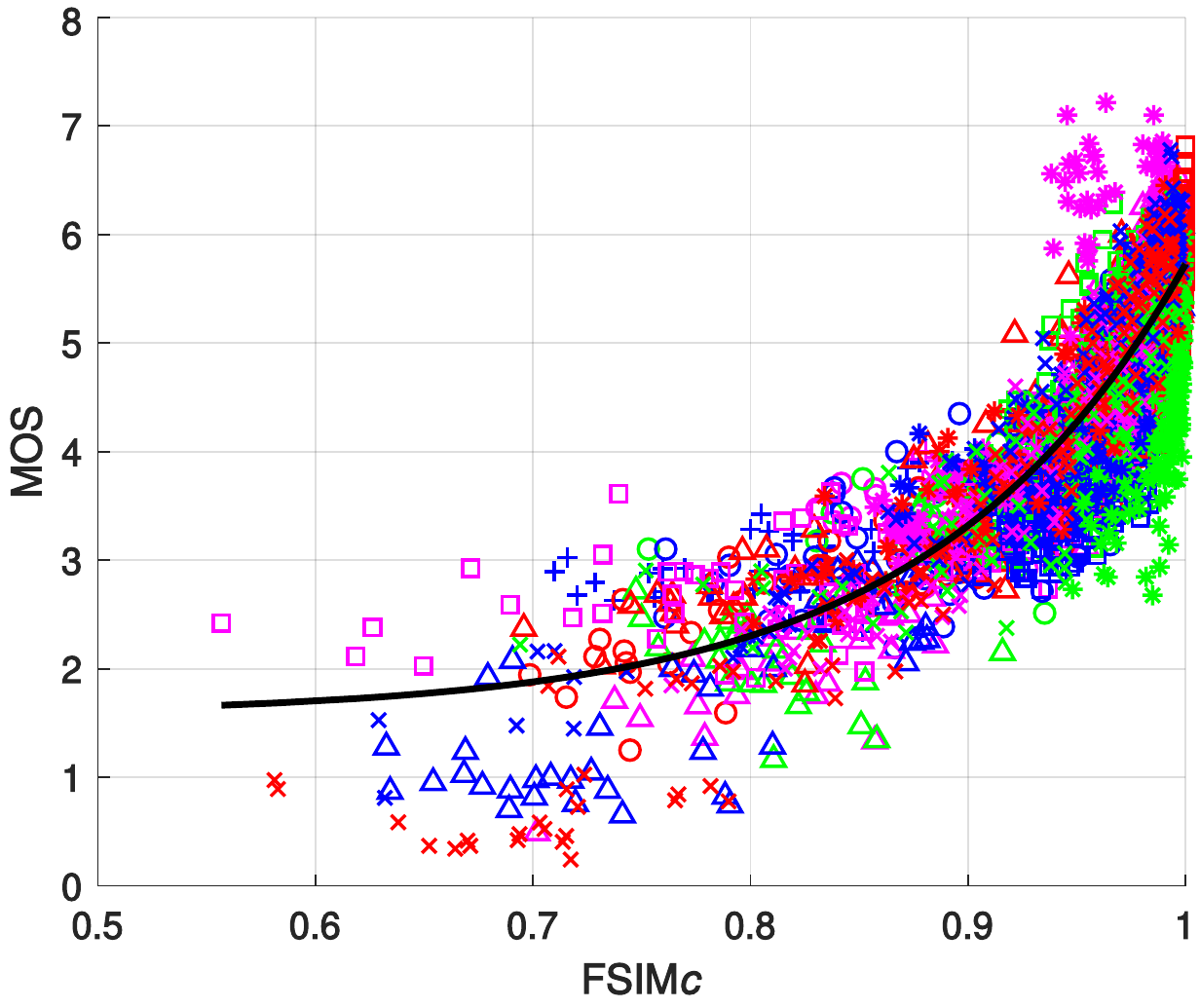}}\\
    &
    \subfloat{\includegraphics[height=0.27\linewidth]{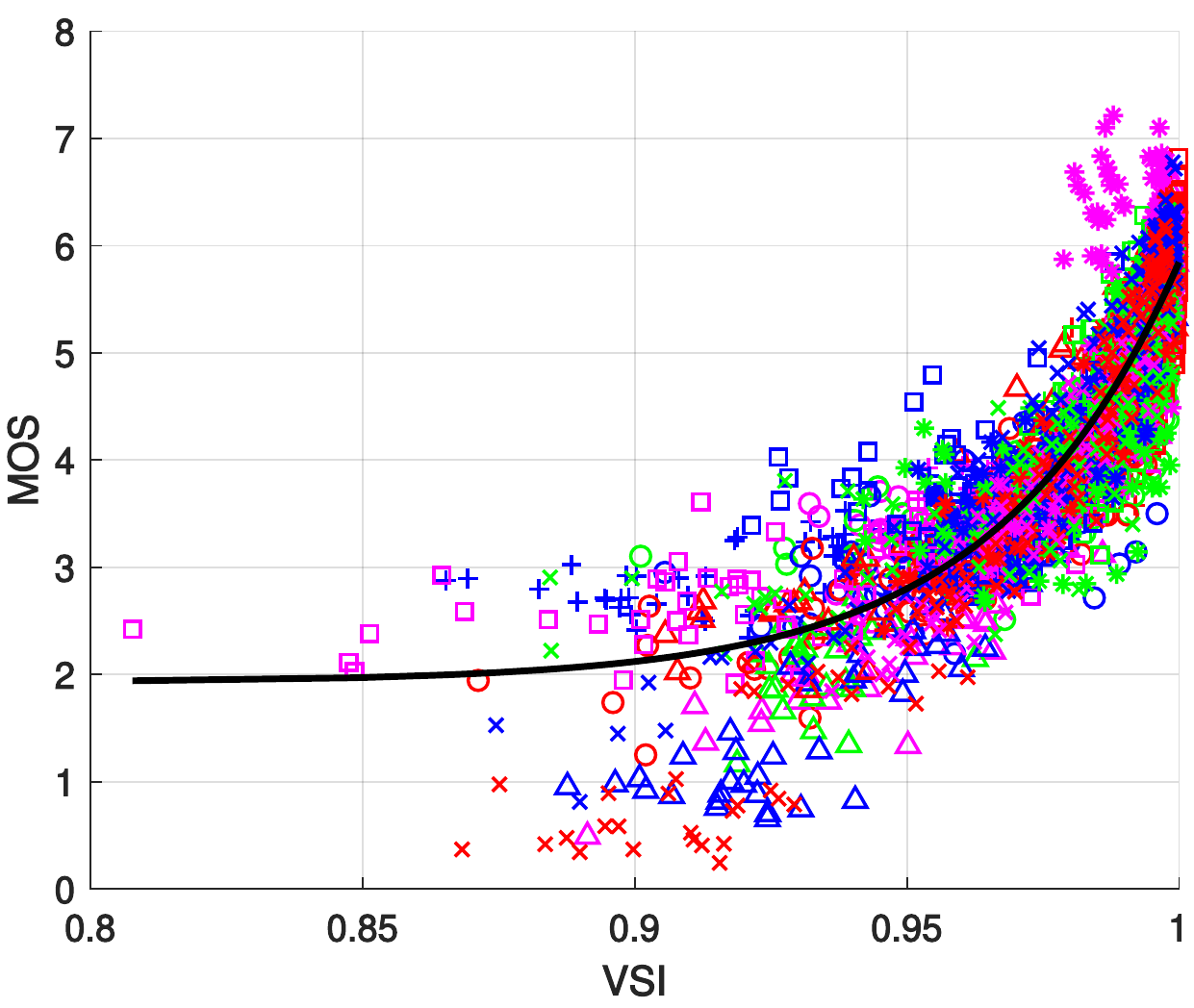}} \hskip.2em 
    \subfloat{\includegraphics[height=0.27\linewidth]{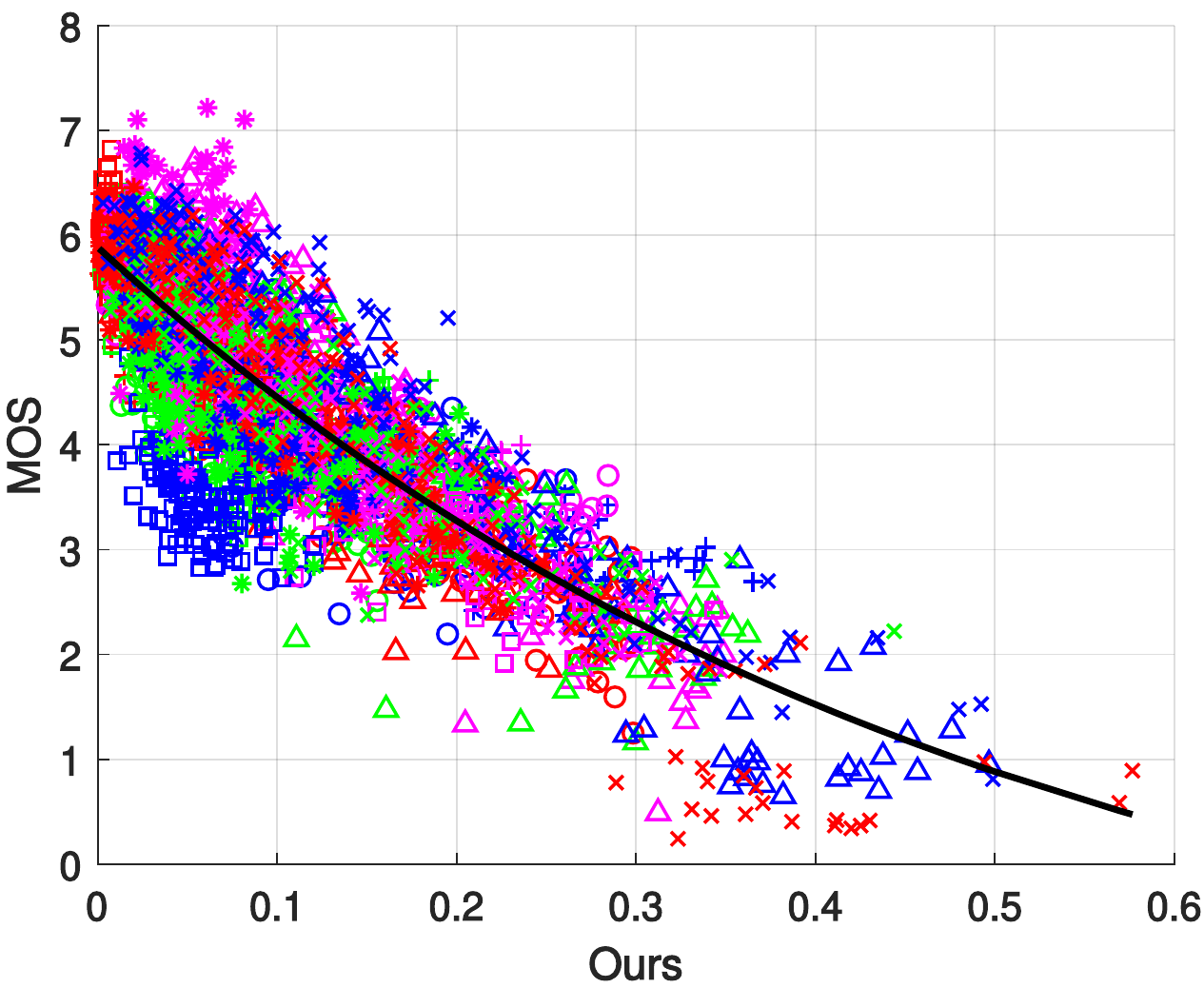}} 
    \end{tabular}
  \caption{Comparison of human mean opinion scores (MOSs) against SSIM, FSIM$_\mathrm{c}$, VSI, and DISTS (ours) on the TID2013 database. \comment{The fitted curve of DISTS is slightly more linear than the others.}}
  \label{fig:scatter}
\end{figure*}

\section{Experiments}

In this section, we  present the implementation details of the proposed DISTS. We then compare our method with a wide range of image similarity models in terms of quality prediction, texture similarity, texture classification/retrieval, and invariance of geometric transformations.

\subsection{Implementation Details}\label{subsec:id}
We fixed the filter kernels of the pre-trained VGG, and learned the perceptual weights $\{\alpha,\beta\}$. The training was carried out by optimizing the objective function in Eq.~(\ref{eq:loss}), assuming a value of $\lambda=1$, using Adam~\cite{kingma2014adam} with a batch size of 32 and an initial learning rate of $1\times10^{-4}$. After every 1K iterations, we reduced the learning rate  by a factor of $2$. We trained DISTS for 5K iterations, which takes approximately one hour on an NVIDIA GTX 2080 GPU. To ensure a unique minimum of our model, we projected the weights of the zeroth stage onto the interval $[0.02, 1]$ after each gradient step. We chose a $5\times 5$ Hanning window to reduce subsampling-induced aliasing in the VGG representation. Both $c_1$ in Eq.~(\ref{eq:s1}) and $c_2$ in Eq.~(\ref{eq:s2}) were set to $10^{-6}$. During training and testing, we followed the suggestions in~\cite{wang2004image}, and re-scaled the input images such that the smaller dimension has $256$ pixels. \added{The size of texture patches as input to Eq.~(\ref{eq:loss2}) was $256\times256\times3$, cropped from the same texture images.}

\subsection{Performance on Quality Prediction} 
After training on the entire KADID dataset~\cite{lin2019kadid}, DISTS was tested on the other three standard IQA databases LIVE~\cite{LIVE}, CSIQ~\cite{larson:011006} and TID2013~\cite{Ponomarenko201557}. We used the Pearson linear correlation coefficient (PLCC), the Spearman rank correlation coefficient (SRCC), and the Kendall rank correlation coefficient (KRCC) as  evaluation criteria. Before computing PLCC, we fitted a four-parameter function to allow and compensate for a smooth
nonlinear relationship:
\begin{align}
\hat{D}=\left(\eta_{1}-\eta_{2}\right) /\left(1+\exp \left(-\left(D-\eta_{3}\right) /\left|\eta_{4}\right|\right)\right)+\eta_{2},
\label{eq:nonlinearMap}
\end{align}
where $\{\eta_{i}\}_{i=1}^4$ are parameters. 
We compared DISTS against a set of full-reference IQA methods, including nine knowledge-driven models and three data-driven CNN-based models. The implementations of all methods were obtained from the respective authors, except for DeepIQA~\cite{bosse2018deep}, which was retrained on KADID for fair comparison. As LPIPS~\cite{zhang2018unreasonable} has different configurations, we chose the default one (known as \textit{LPIPS-VGG-lin}). 

Results, reported in Table \ref{tab:iqa_database}, demonstrate that DISTS performs favorably in comparison to both classic methods (\eg, PSNR and SSIM~\cite{wang2004image}) and CNN-based models (\eg, DeepIQA~\cite{bosse2018deep} and LPIPS~\cite{zhang2018unreasonable}). Overall, the best performances across all three databases and all comparison metrics are obtained with MAD~\cite{larson:011006}, FSIM$_\mathrm{c}$~\cite{zhang2011fsim} and GMSD~\cite{xue2014gradient}. It is worth noting that these three databases have been re-used for many years throughout the algorithm design processes, and recent full-reference IQA methods may be unintentionally over-adapting via extensive computational module selection, raising the risk of over-fitting (see Fig.~\ref{fig:optim}).
Fig.~\ref{fig:scatter} shows scatter plots of raw model predictions of representative IQA methods versus  subjective mean opinion scores (MOSs)  on the TID2013 database. From the fitted functions (Eq.~(\ref{eq:nonlinearMap})), one can observe that DISTS is nearly linear in MOS.

We also tested DISTS on  BAPPS~\cite{zhang2018unreasonable}, a large-scale and highly-varied patch similarity dataset. BAPPS contains  traditional synthetic distortions, such as geometric and photometric manipulation, noise contamination, blurring and compression, CNN-based distortions (e.g., from denoising autoencoders and image restoration tasks), and distortions generated by real-world image processing systems. The human judgments are obtained from a two-alternative forced choice (2AFC) experiment. \added{The evaluation criterion is the 2AFC score \cite{zhang2018unreasonable}, which quantifies the proportion of human agreement with the IQA model, computed as $p\hat{p}+(1-p)(1-\hat{p})$, where $p$ is the percentage of human choices in favor of a given image in each pair, and $\hat{p}\in\{0,1\}$ is the preference of the IQA model. Larger values indicate better agreement between model predictions and human judgments.} Results are compiled in Table \ref{tab:bapps}, showing that DISTS (which was not trained on BAPPS, or any similar database) achieves  a comparable performance to LPIPS~\cite{zhang2018unreasonable} (which was trained on BAPPS). We conclude that DISTS predicts image quality well,  and generalizes well to challenging unseen distortions, such as those caused by real-world algorithms.

\begin{table*}
\newcommand{\tabincell}[2]{\begin{tabular}{@{}#1@{}}#2\end{tabular}}
  \centering
  \caption{Performance comparison of various IQA methods on the BAPPS~\cite{zhang2018unreasonable} dataset using the 2AFC score, which quantifies the agreement with human judgments.  Values lie in the range $[0,1]$, with a higher value indicating better agreement}
    \begin{tabular}{lccccccccc}
    \toprule 
     \multirow{2}{*}[-7pt]{Method} & \multicolumn{3}{c}{Synthetic distortions}&\multicolumn{5}{c}{Distortions by real-world algorithms}&\multirow{2}{*}[-7pt]{All} \\ 
    \cmidrule(lr){2-4} \cmidrule(lr){5-9}
    &Traditional & CNN-based& All & \tabincell{c}{Super\\resolution} & \tabincell{c}{Video\\ deblurring} & Colorization & \tabincell{c}{Frame\\interpolation}& All \\ \hline 
     Human  & 0.808 & 0.844 & 0.826 & 0.734 & 0.671 & 0.688 & 0.686 & 0.695 & 0.739 \\  
     \hline
     PSNR  & 0.573 & 0.801 & 0.687 & 0.642 & 0.590 & 0.624 &  0.543&  0.614&  0.633 \\
     SSIM~\cite{wang2004image} & 0.605 & 0.806 & 0.705 & 0.647 & 0.589 & 0.624 & 0.573 & 0.617 & 0.640 \\
     MS-SSIM~\cite{wang2003multiscale}  & 0.585 & 0.768 & 0.676 & 0.638 & 0.589 & 0.524 & 0.572 & 0.596 & 0.617\\
     VSI~\cite{zhang2014vsi}  & 0.630 & 0.818 & 0.724 & 0.668 & 0.592 & 0.597 & 0.568 & 0.622 & 0.648 \\
     MAD~\cite{larson:011006}  & 0.598 & 0.770 & 0.684 & 0.655 & 0.593 & 0.490 & 0.581 & 0.599 & 0.621\\
     VIF~\cite{sheikh2006image}  & 0.556 & 0.744 & 0.650 & 0.651 & 0.594 & 0.515 & 0.597 & 0.603 & 0.615\\
     FSIM$_\mathrm{c}$~\cite{zhang2011fsim} & 0.627 & 0.794 & 0.710 & 0.660 & 0.590 & 0.573 & 0.581 & 0.615 & 0.640\\
     NLPD~\cite{laparra2016perceptual}  & 0.550 & 0.764 & 0.657 & 0.655 & 0.584 & 0.528 & 0.552 & 0.600 & 0.615\\
     GMSD~\cite{xue2014gradient}  & 0.609 & 0.772 & 0.690 & 0.677 & 0.594 & 0.517 & 0.575 & 0.613 & 0.633\\
     \hline
     \textit{DeepIQA}~\cite{bosse2018deep}  &  0.703 &    0.794  &  0.748 &  0.660 & 0.582 & 0.585 & 0.598 & 0.615 & 0.650 \\
     \textit{PieAPP}~\cite{prashnani2018pieapp} & 0.727 & 0.770 & 0.746 & 0.684 & 0.585 & 0.594 & 0.598  &  0.627 & 0.659 \\
     \textit{LPIPS}~\cite{zhang2018unreasonable}  & \textbf{0.760} & \textbf{0.828} & \textbf{0.794} & \textbf{0.705} & \textbf{0.605} & \textbf{0.625} & \textbf{0.630 }& \textbf{0.641} & \textbf{0.692}\\
     \hline
     \textit{DISTS (ours)}& \textbf{0.772} & \textbf{0.822} & \textbf{0.797} & \textbf{0.710} & \textbf{0.600} & \textbf{0.627} &  \textbf{0.625}  & \textbf{0.651} & \textbf{0.689} \\
    \bottomrule
    \end{tabular}
  \label{tab:bapps}
\end{table*}

\subsection{Performance on Texture Similarity}
We also tested the performance of DISTS on texture quality assessment. Since most knowledge-driven full-reference IQA models are not good at measuring texture similarity (see Fig~\ref{fig:texturefailure}), we only included a subset for reference. To these we added CW-SSIM~\cite{wang2005translation} and three computational models specifically designed for texture similarity - STSIM~\cite{zujovic2013structural}, NPTSM~\cite{alfarraj2016content} and IGSTQA~\cite{golestaneh2018synthesized}. STSIM is available in several configurations, and we chose local STSIM-2 that is publicly available\footnote{\url{https://github.com/andreydung/Steerable-filter}}. 

We used a synthesized texture quality assessment database SynTEX~\cite{golestaneh2015effect}, consisting of $21$ reference textures with $105$ synthesized versions  generated by five texture synthesis algorithms. 
Table~\ref{tab:syntex} shows the results of correlation coefficients, where we can see that texture similarity models generally perform better than IQA models. Focusing on texture similarity,
IGSTQA~\cite{golestaneh2018synthesized} achieves a relatively high performance,  but is still inferior to DISTS. This indicates that the VGG-based global measurements of DISTS capture the essential features and attributes of visual textures.

To further test the capabilities of DISTS in quantifying texture distortions, we constructed a texture quality database (TQD), based on 10 texture images selected from Pixabay\footnote{\url{https://pixabay.com/images/search/texture}}. Each texture image was corrupted with seven traditional synthetic distortions: additive white Gaussian noise, Gaussian blur, JPEG compression, JPEG2000 compression, pink noise, chromatic aberration, and image color quantization. For each distortion type, we randomly selected one distortion level from a set of three levels, and applied it to each texture image. We then created four copies of each texture using different texture synthesis algorithms, including two classical ones (a parametric model~\cite{portilla2000parametric} and a non-parametric model~\cite{efros1999texture}) and two CNN-based algorithms ~\cite{gatys2015texture,snelgrove2017high}. Last, to produce ``high-quality'' images, we randomly cropped four subimages from each of the original textures. In total, TQD has $10 \times 15$ images. We gathered human data from $10$ subjects, who had general knowledge of image processing but were unaware of the detailed purpose of the study. The viewing distance was fixed to enforce a visual resolution $32$ pixels per degree of visual angle.  Each subject was shown all ten sets of images, one set at a time, starting with the reference image, and was asked to rank the images according to their perceptual similarity to the reference. 
Rather than simply averaging the human opinions, we used reciprocal rank
fusion~\cite{cormack2009reciprocal} to obtain the final ranking
\begin{align}
    r(x)=\sum_{k=1}^{K} \frac{1}{\gamma+r_{k}(x)},
\end{align}
where $r_{k}(x)$ is the rank of $x$ given by the $k$-th subject and $\gamma$ is an additive constant that helps to mitigate the impact of outliers~\cite{cormack2009reciprocal}. 
Table~\ref{tab:syntex} lists the results, where we computed the correlations within each texture pattern and averaged them across textures. We found that nearly all existing models perform poorly on the new database, including those tailored for texture similarity. In contrast, DISTS significantly outperforms these methods by a large margin. Fig.~\ref{fig:rank} shows a set of texture examples, where we noticed that DISTS gives high rankings to resampled images and low rankings to images suffering from visible distortions. This demonstrates that DISTS is in close agreement with human perception of texture quality, and suggests  potential uses in other texture analysis problems, such as high-quality texture retrieval.

\begin{table}[t]
  \centering
  \caption{Performance comparison on two texture quality databases. Texture similarity models are highlighted in italics}
   \setlength{\tabcolsep}{1.6mm}{
    \begin{tabular}{lcccccc}
    \toprule
      \multirow{2}{*}[-3pt]{Method} & \multicolumn{3}{c}{SynTEX~\cite{golestaneh2015effect}} & \multicolumn{3}{c}{TQD (proposed)} \\ \cmidrule(lr){2-4} \cmidrule(lr){5-7}
    & PLCC & SRCC & KRCC & PLCC & SRCC & KRCC \\ \hline 
     SSIM~\cite{wang2004image}  & 0.619 & 0.620 & 0.446 & 0.330 & 0.307 & 0.185 \\
     CW-SSIM~\cite{wang2005translation}  & 0.532 & 0.497 & 0.335 & 0.344 & 0.325 & 0.238 \\
     DeepIQA~\cite{bosse2018deep} & 0.550 & 0.512 & 0.354 & 0.458 & 0.444 & 0.323 \\
     PieAPP~\cite{prashnani2018pieapp} & 0.719 & 0.715 & 0.532 & 0.721 & 0.718 & 0.556 \\
     LPIPS~\cite{zhang2018unreasonable} & 0.674 & 0.663 & 0.478 &  0.402 & 0.392 & 0.301 \\
     \textit{STSIM}~\cite{zujovic2013structural} & 0.650 & 0.643 & 0.469 & 0.422 & 0.408 & 0.315\\
     \textit{NPTSM}~\cite{alfarraj2016content} & 0.505 & 0.496 & 0.361 & 0.678 &  0.679 & 0.547 \\ 
     \textit{IGSTQA}~\cite{golestaneh2018synthesized} & \textbf{0.816} &  \textbf{0.820} & \textbf{0.621} &  \textbf{0.804} &  \textbf{0.802} & \textbf{0.651} \\
     DISTS (ours) & \textbf{0.901} &  \textbf{0.923} & \textbf{0.759} & \textbf{0.903} & \textbf{0.910} & \textbf{0.785} \\
    \bottomrule
    \end{tabular}}
  \label{tab:syntex} 
\end{table}

\begin{figure*}[t]
  \centering
    \subfloat[]{\includegraphics[height=0.18\linewidth]{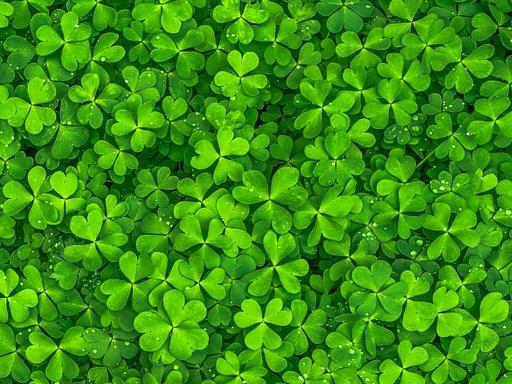}}  \hskip.2em
    \subfloat[]{\includegraphics[height=0.18\linewidth]{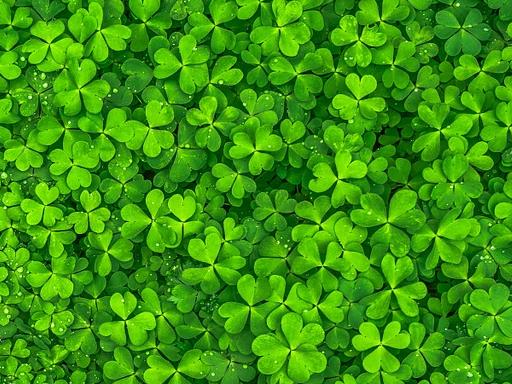}}  \hskip.2em
    \subfloat[]{\includegraphics[height=0.18\linewidth]{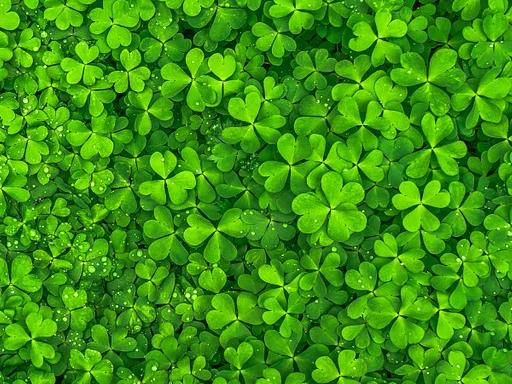}}  \hskip.2em
    \subfloat[]{\includegraphics[height=0.18\linewidth]{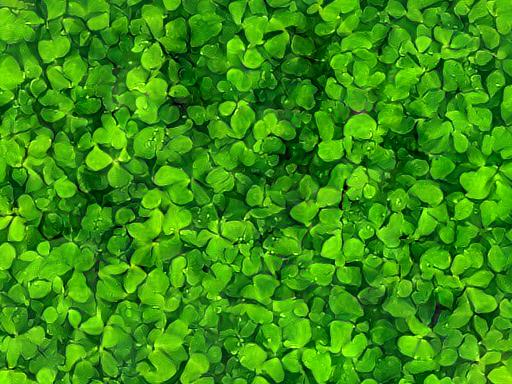}}  \\ \vspace{-0.25cm}
    \subfloat[]{\includegraphics[height=0.18\linewidth]{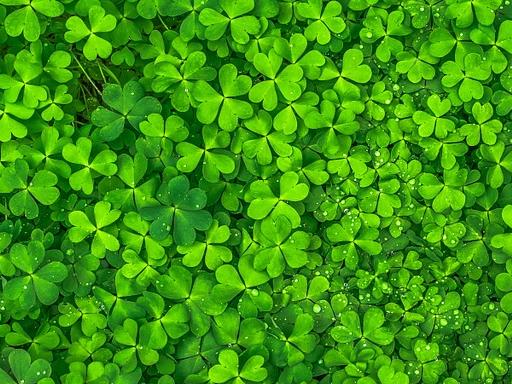}} \hskip.2em
    \subfloat[]{\includegraphics[height=0.18\linewidth]{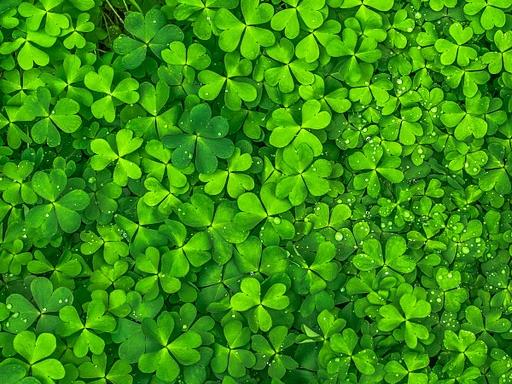}}  \hskip.2em
    \subfloat[]{\includegraphics[height=0.18\linewidth]{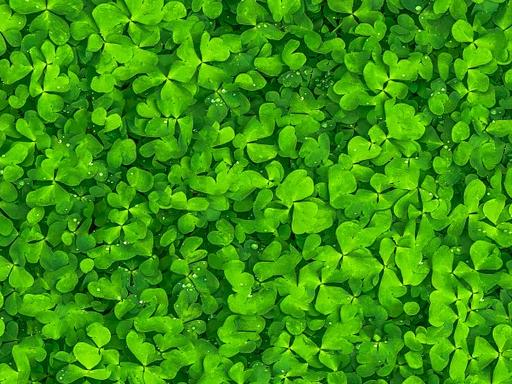}}  \hskip.2em
    \subfloat[]{\includegraphics[height=0.18\linewidth]{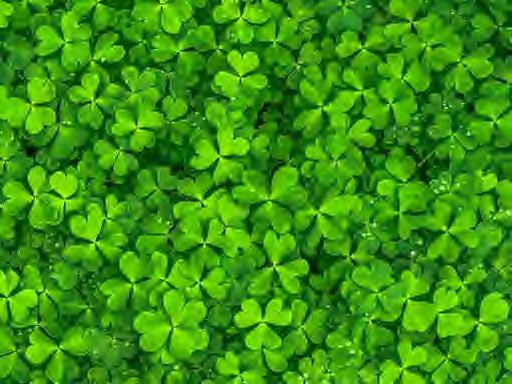}} \\ \vspace{-0.25cm}
    \subfloat[]{\includegraphics[height=0.18\linewidth]{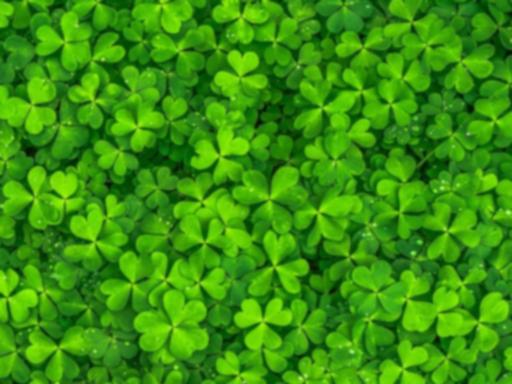}}  \hskip.2em
    \subfloat[]{\includegraphics[height=0.18\linewidth]{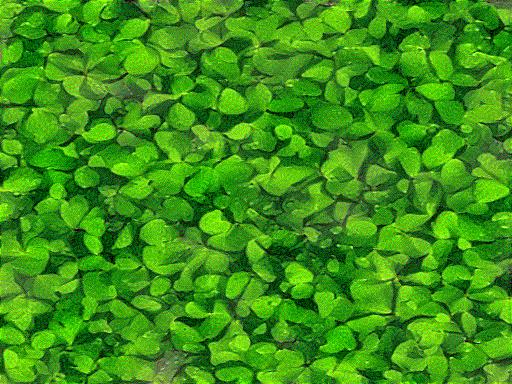}}  \hskip.2em
    \subfloat[]{\includegraphics[height=0.18\linewidth]{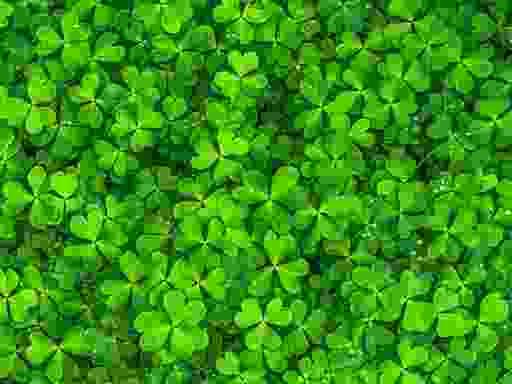}}  \hskip.2em
    \subfloat[]{\includegraphics[height=0.18\linewidth]{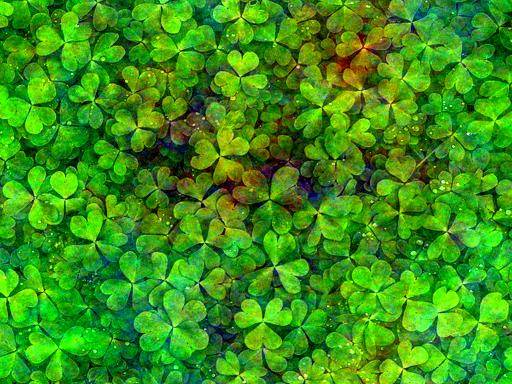}} \\ \vspace{-0.25cm}
    \subfloat[]{\includegraphics[height=0.18\linewidth]{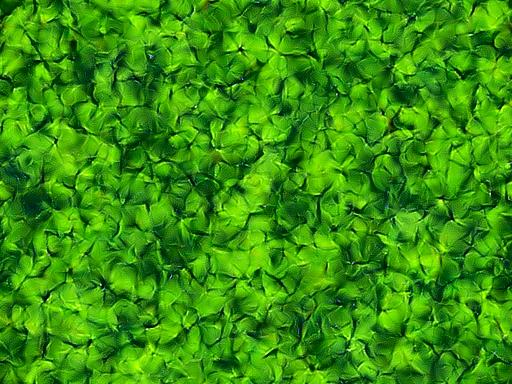}}  \hskip.2em
    \subfloat[]{\includegraphics[height=0.18\linewidth]{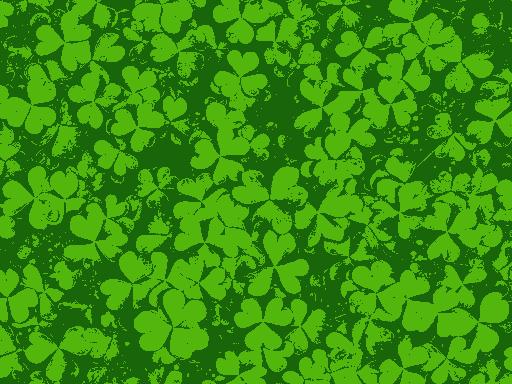}}  \hskip.2em
    \subfloat[]{\includegraphics[height=0.18\linewidth]{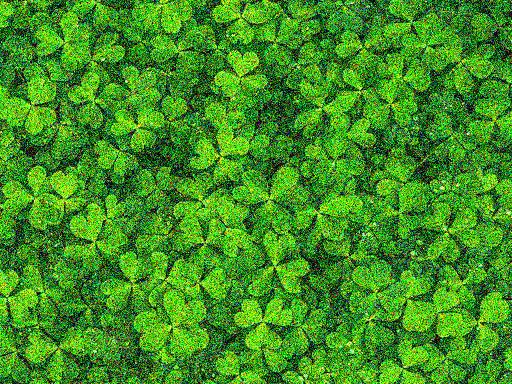}}  \hskip.2em
    \subfloat[]{\includegraphics[height=0.18\linewidth]{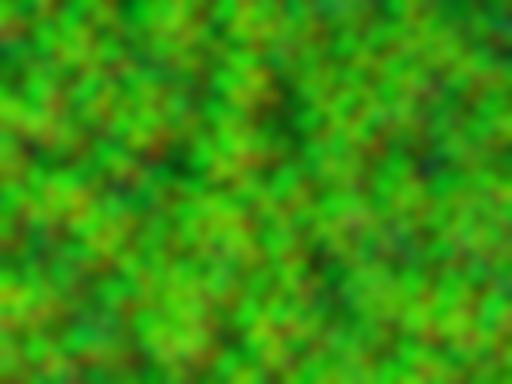}} \\
  \caption{One set of texture images from TQD, ordered according to their rankings by DISTS. \textbf{(a)} Reference image. \textbf{(b)-(p)} Corrupted images ranked by DISTS from high quality to low quality, respectively.}
  \label{fig:rank}
\end{figure*}

\begin{figure}
  \centering
  \includegraphics[height=0.75\linewidth]{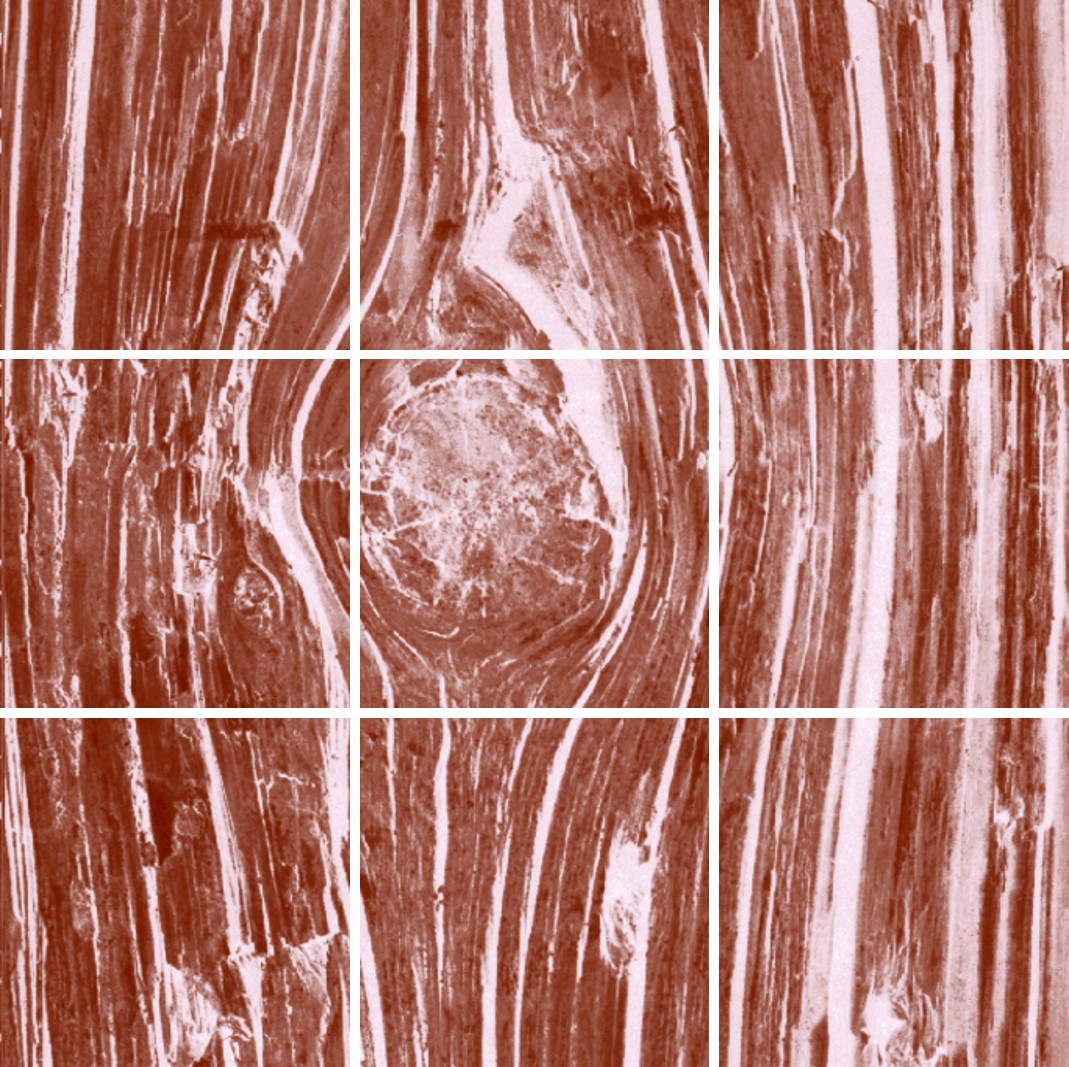}
  \caption{Nine non-overlapping patches sampled from an example texture photograph in the Brodatz color texture dataset.}
  \label{fig:brodat}
\end{figure}

\subsection{Applications to Texture Classification and Retrieval}

We also applied DISTS to texture classification and retrieval. 
We used the grayscale and color Brodatz texture databases~\cite{abdelmounaime2013new} (denoted by GBT and CBT, respectively), each of which contains 
$112$ different texture images. We resampled nine non-overlapping $256\times256\times3$ patches from each texture pattern. Fig.~\ref{fig:brodat} shows a representative texture image from CBT, partitioned into nine patches.

The texture classification problem consists of assigning an unknown sample image to one of the known texture classes.  For each texture, we randomly chose five patches for training, two for validation, and the remaining two for testing. A simple $k$-nearest neighbors ($k$-NN) classification algorithm was implemented, which allowed us to incorporate and compare different similarity models as distance measures. 
The predicted label of a test image was determined  by a majority vote over its $k$ nearest neighbors in the training set, where the value of $k$ was chosen using the validation set.
We implemented a baseline model - the bag-of-words of SIFT features~\cite{lowe2004sift} with $k$-NN.
The classification accuracy results are listed in Table~\ref{tab:acc_brodatz}, where we can see that this baseline model beats most image similarity-based $k$-NN classifiers, except LPIPS (on CBT) and DISTS. This shows that our model is effective at discriminating and classifying textures that are visually different to the human eye. 

The content-based texture retrieval problem consists of searching for images from a large database that are visually similar.
In our experiment, for each texture, we set three patches as the queries, and aimed to retrieve the remaining six patches. Specifically, the distances between each query and the remaining images in the dataset were computed and ranked so as to retrieve the images with minimal distances. To evaluate the retrieval performance, we used 
mean average precision (mAP), which is defined by
\begin{align}
    \mathrm{mAP}=\frac{1}{Q}\sum_{q=1}^Q \left(\frac{1}{K}\sum_{k=1}^{K} P(k)\times \mathrm{rel}(k)\right),
\end{align}
where $Q$ is the number of queries, $K$ is the   number of similar images in the database, $P(k)$ is the precision at cut-off $k$ in the ranked list, and $\mathrm{rel}(k)$ is an indicator function equal to one if the item at rank $k$ is a similar image and zero otherwise.
As seen in Table~\ref{tab:acc_brodatz},  DISTS achieves the best performance on both CBT and GBT datasets. The classification/retrieval errors are primarily due to textures with noticeable inhomogeneities (\eg,  middle patch in Fig.~\ref{fig:brodat}). In addition, the performance on GBT is slightly reduced compared with that on CBT, indicating the importance of color information in these tasks.

Classification and retrieval of texture patches resampled from the same images are relatively easy tasks. We also tested DISTS on a more challenging large-scale texture database, the Amsterdam Library of Textures (ALOT)~\cite{burghouts2009material}, containing photographs of $250$ textured surfaces, from $100$ different viewing angles and illumination conditions. 
Again, we adopted a na\"{i}ve $k$-NN method ($k=100$) using our model as the measure of distance,
and tested it on $20\%$ of the samples randomly selected from the database. Without training on ALOT, DISTS achieves a reasonable classification accuracy of $0.926$, albeit lower than the value of $0.959$ achieved by a knowledge-driven method~\cite{sulc2014fast} with hand-crafted features and support vector machines, and the value of $0.993$ achieved by a data-driven CNN-based method~\cite{cimpoi2016deep}. The primary cause of errors when using DISTS in this task is that images from the same textured surface can appear quite different under different lighting or viewpoint conditions, as seen in the example in Fig.~\ref{fig:soil}. DISTS, which is designed to capture visual appearance only, could likely be improved for this task by fine-tuning the perceptual weights (along with the VGG network parameters) on a small subset of human-labelled ALOT images.

\begin{table}
  \centering
  \caption{Classification and retrieval performance comparison on the Brodatz texture dataset~\cite{abdelmounaime2013new}}
    \begin{tabular}{lccccc}
    \toprule
      \multirow{2}{*}[-3pt]{Method} & \multicolumn{2}{c}{Classification acc.} & \multicolumn{2}{c}{Retrieval mAP} \\ \cmidrule(lr){2-3} \cmidrule(lr){4-5}
      & CBT & GBT & CBT & GBT \\ \hline 
      SSIM~\cite{wang2004image} & 0.397 & 0.210 & 0.371 & 0.145 \\
      CW-SSIM~\cite{wang2005translation} & - & 0.424  & - &  0.351\\
      DeepIQA~\cite{bosse2018deep} &  0.388 & 0.308 & 0.389 & 0.293\\
      PieAPP~\cite{prashnani2018pieapp} & 0.173 & 0.115 & 0.257 & 0.153\\
      LPIPS~\cite{zhang2018unreasonable} & \textbf{0.960} & 0.861 & \textbf{0.951} & 0.839\\
      STSIM~\cite{zujovic2013structural} & - & 0.708  & - & 0.632 \\
      NPTSM~\cite{alfarraj2016content} & - & 0.895  & - & 0.837 \\
      IGSTQA~\cite{golestaneh2018synthesized} & - & 0.862  & - & 0.798 \\
      SIFT~\cite{lowe2004sift} & 0.924 & \textbf{0.928} & 0.859 & \textbf{0.865}\\
      DISTS (ours) & \textbf{0.995} & \textbf{0.968} &\textbf{0.988} & \textbf{0.951} &\\
    \bottomrule
    \end{tabular}
  \label{tab:acc_brodatz} 
\end{table}

\begin{figure*}
  \centering
    \subfloat[Reference]{\includegraphics[height=0.127\linewidth]{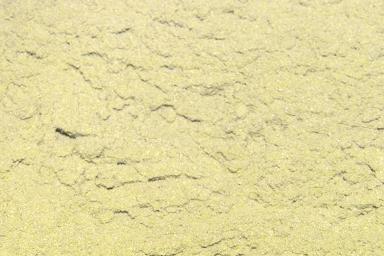}}\hskip.3em
    \subfloat[$D=0.173$]{\includegraphics[height=0.127\linewidth]{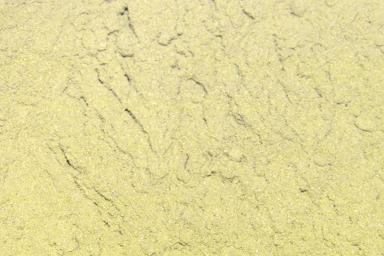}}\hskip.3em
    \subfloat[$D=0.255$]{\includegraphics[height=0.127\linewidth]{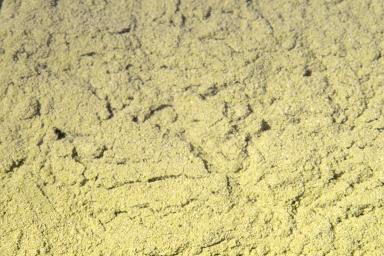}}\hskip.3em
    \subfloat[$D=0.398$]{\includegraphics[height=0.127\linewidth]{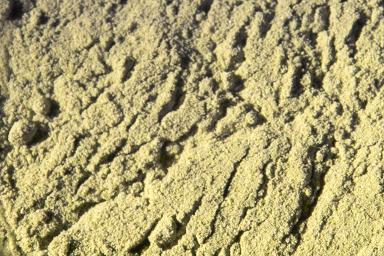}}\hskip.3em
    \subfloat[$D=0.427$]{\includegraphics[height=0.127\linewidth]{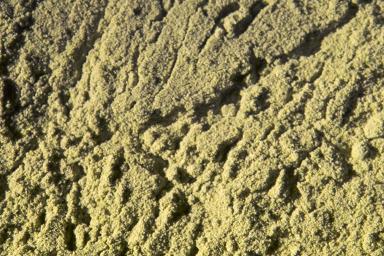}}
  \caption{Five images of ``soil'', photographed under different lighting and viewpoint conditions, from the ALOT dataset. We computed the DISTS score for each of the images (b)-(e) with respect to the reference (a). Consistent with the significantly higher values, (d) and (e) are visually distinct from (a), although all of these images are drawn from the same category. }
  \label{fig:soil}
\end{figure*}

\subsection{Invariance to Geometric Transformations} \label{sec:geo}
Apart from texture similarity, most full-reference IQA measures  fail dramatically when the original and distorted images are misregistered, either globally or locally. The underlying reason is again reliance on the assumption of pixel alignment. Although pre-registration can alleviate this issue, it comes with substantial computational complexity, and does not work well in the presence of severe distortions~\cite{ma2018geometric}. In this subsection,
we investigated the degree of invariance of DISTS to geometric transformations that are imperceptible to the visual system. 

As there are no subject-rated IQA databases designed for this specific purpose, we augmented the LIVE database~\cite{LIVE} (LIVE$\_$Aug) with geometric transformations. In real-world scenarios, an image should first undergo geometric transformations (\eg, camera movement) and then distortions (\eg, JPEG compression). We followed the suggestion in~\cite{ma2018geometric}, and implemented an equivalent but much simpler approach - directly applying the transformations to the original image. Specifically, we augmented reference images using four geometric transformations: 1) shift by $5\%$ pixels in  horizontal direction, 2) clockwise rotation by a degree of $3^{\circ}$, 3) dilation by a factor of $1.05$, and 4) their combination. This yields a set of $(4+1) \times 779$ reference-distortion pairs in the augmented LIVE database. Since the transformations are modest, the quality scores of distorted images with respect to the modified reference images are assumed to be the same as with respect to the original reference image.

The SRCC results of the augmented LIVE database are shown in Table~\ref{tab:aug_cmp}. We found that data-driven methods based on CNNs significantly outperform traditional ones.
Even so, their performance is often made worse by sensitivity to transformations that arises during downsampling without proper Nyquist band limiting.
Trained on augmented data by geometric transformations, GTI-CNN~\cite{ma2018geometric} achieves desirable invariance at the cost of discarding perceptually important features (see Fig.~\ref{fig:optim}). 
DISTS is seen to perform extremely well across all distortions and exhibit a high degree of robustness to geometric transformations, which we believe arises from  1) replacing max pooling with $\ell_2$ pooling, 2) using global quality measurements, and 3) optimizing for invariance to texture resampling (see also Fig.~\ref{fig:aug_example}).

\begin{table}
\centering
  \caption{SRCC comparison of IQA models to human perception using the LIVE database augmented with geometric transformations}
  \setlength{\tabcolsep}{1.6mm}{
    \begin{tabular}{lccccc}
    \toprule
     Method & Translation & Rotation & Dilation & Mixed & Total\\\hline
     PSNR &  0.159 & 0.153 & 0.152 & 0.146 & 0.195\\
     SSIM~\cite{wang2004image} & 0.171 & 0.168 & 0.177 & 0.166 & 0.190\\
     MS-SSIM~\cite{wang2003multiscale}  & 0.165 & 0.174 & 0.198 & 0.174 & 0.177\\
     CW-SSIM~\cite{wang2005translation}  & 0.207 & 0.312 & 0.364 & 0.219 & 0.194\\
     VSI~\cite{zhang2014vsi} & 0.282 & 0.360 & 0.372 & 0.297& 0.309\\
     MAD~\cite{larson:011006} & 0.354 & 0.630 & 0.587 & 0.453& 0.327\\
     VIF~\cite{sheikh2006image}  & 0.296 & 0.433 & 0.522 & 0.387 & 0.294\\    
     FSIM$_\mathrm{c}$~\cite{zhang2011fsim}  & 0.380 & 0.396 & 0.408 & 0.365 & 0.339\\
     NLPD~\cite{laparra2016perceptual} & 0.062 & 0.074 & 0.083 & 0.066 & 0.112\\
     GMSD~\cite{xue2014gradient}  & 0.252 & 0.299 & 0.303 & 0.247 & 0.288\\ \hline
     DeepIQA~\cite{bosse2018deep} & 0.822 & \textbf{0.919} & \textbf{0.918} & 0.881  & 0.859\\
     PieAPP~\cite{prashnani2018pieapp} & 0.850 & 0.903 & 0.902 & 0.879 & 0.874\\
     LPIPS~\cite{zhang2018unreasonable}& 0.811 & 0.908 & 0.893 & 0.861 & 0.779\\
     GTI-CNN~\cite{ma2018geometric} & \textbf{0.864} & 0.906 & 0.904 & \textbf{0.890} & \textbf{0.875}\\ \hline
     DISTS (ours) & \textbf{0.948} & \textbf{0.939} & \textbf{0.946} & \textbf{0.937} & \textbf{0.928}  \\
    \bottomrule
    \end{tabular}}
  \label{tab:aug_cmp}
\end{table}

\begin{figure*}
  \centering
  \begin{tabular}{ccccc}
    \multirow{2}{*}[30pt]{\subfloat[SSIM$\uparrow$ / DISTS$\downarrow$]{\includegraphics[height=0.128\linewidth]{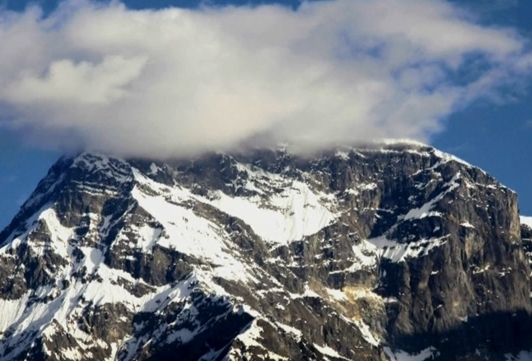}}} & 
    \subfloat[$0.486$ / $0.057$]{\includegraphics[height=0.128\linewidth]{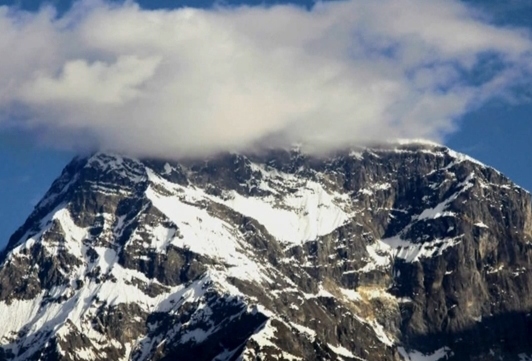}}  \hskip.2em
    \subfloat[$0.482$ / $0.063$]{\includegraphics[height=0.128\linewidth]{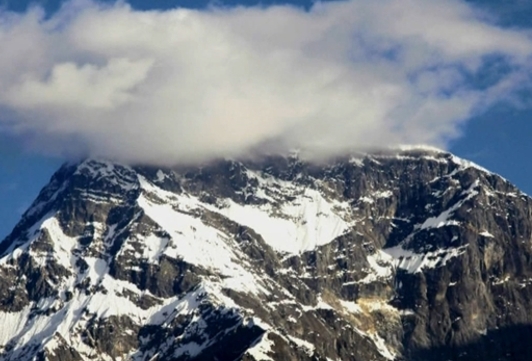}}  \hskip.2em
    \subfloat[$0.493$ / $0.064$]{\includegraphics[height=0.128\linewidth]{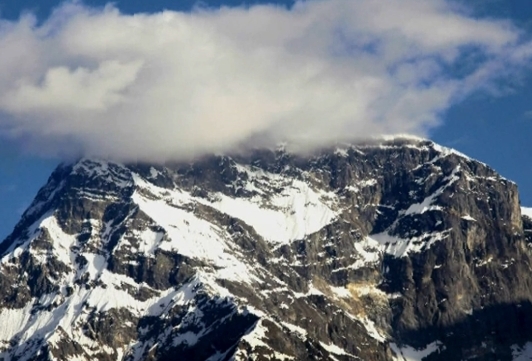}}  \hskip.2em
    \subfloat[$0.630$ / $0.069$]{\includegraphics[height=0.128\linewidth]{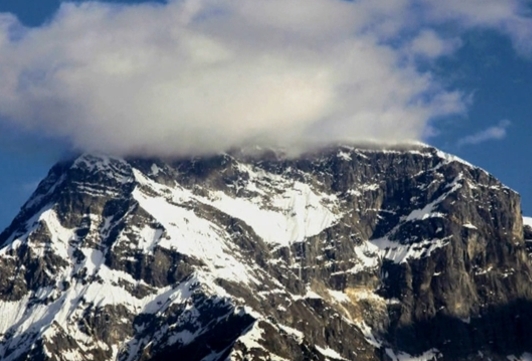}} \\
    &
    \subfloat[$0.539$ / $0.161$]{\includegraphics[height=0.128\linewidth]{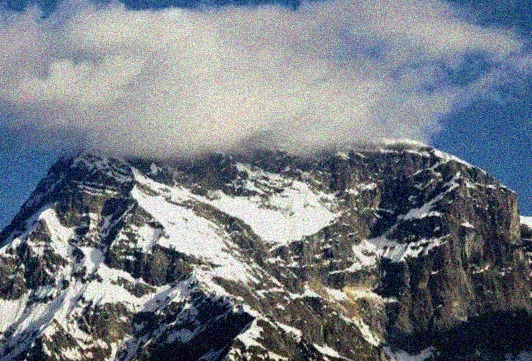}}  \hskip.2em
    \subfloat[$0.637 / 0.329$]{\includegraphics[height=0.128\linewidth]{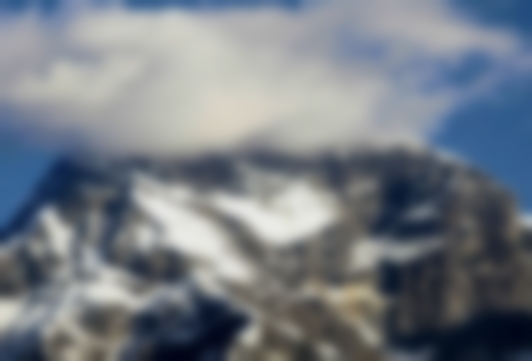}}  \hskip.2em
    \subfloat[$0.705$ / $0.270$]{\includegraphics[height=0.128\linewidth]{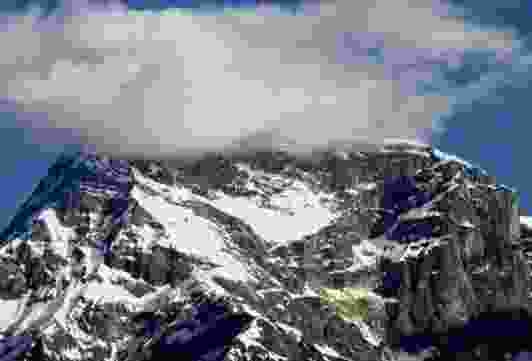}}  \hskip.2em
    \subfloat[$0.730$ / $0.284$]{\includegraphics[height=0.128\linewidth]{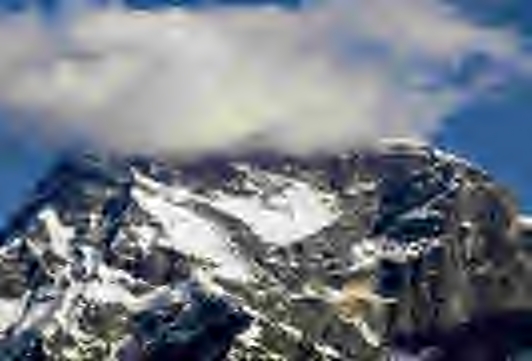}} 
    \end{tabular}
  \caption{A visual example to demonstrate robustness of DISTS to geometric transformations. \textbf{(a)} Reference image. \textbf{(b)} Translated rightward by $5\%$ pixels. \textbf{(c)} Dilated by a factor $1.05$. \textbf{(d)} Rotated by $3$ degrees. \textbf{(e)} Cloud movement. \textbf{(f)} Corrupted with additive Gaussian noise. \textbf{(g)} Gaussian blur. \textbf{(h)} JPEG compression. \textbf{(i)} JPEG2000 compression. Below each image are the values of SSIM and DISTS, respectively. SSIM values are similar or better (larger) for the bottom row, whereas our model reports better (smaller) values for the top row, consistent with human perception.} 
  \label{fig:aug_example}
\end{figure*}

\subsection{Ablation Study} \label{sec:ablation}
\added{In this subsection, we conducted ablation experiments to single out the individual contributions of key modifications of DISTS, in comparison to the most closely related alternative - LPIPS. We trained a series of intermediate models between LPIPS and DISTS:}
\begin{itemize}
    \item [(a)] Original LPIPS; 
    \item [(b)] Replace max pooling in LPIPS with $\ell_2$ pooling;
    \item [(c)] Add the input image on the top of (b);
    \item [(d)] Replace the Euclidean distance in LPIPS with local SSIM measurements (within a sliding window of size $11\times11$) on top of (c);
    \item [(e)] Replace the Euclidean distance in LPIPS with global SSIM measurements on top of (c);
    \item [(f)] Train (c) by adding  the $E_2$ term in Eq. (\ref{eq:loss2});
    \item [(g)] Train (d) by adding the $E_2$ term;
    \item [(h)] Train (e) by adding the $E_2$ term, which is equivalent to DISTS.
\end{itemize}

Performance of these models is shown in Table \ref{tab:ablation}, from which we draw several conclusions. First, $\ell_2$ pooling is slightly better than max pooling. The main motivation of adopting $\ell_2$ pooling is to de-alias the intermediate representations, as documented in \cite{henaff2015geodesics}. Second, incorporating the input image in the representation has little impact on the performance, but it ensures a unique minimum of DISTS, which is beneficial in perceptual optimization~\cite{ding2020optim}. Third, the global SSIM-like distance outperforms the Euclidean distance, especially in measuring similarity of visual textures and invariance to geometric transformations. We  also tested local SSIM measurements within a sliding window size of $11\times11$ (d), which gives inferior performance.
Last, training with the $E_2$ term is important for texture-related tasks, improving invariance to geometric transformations, although it slightly hurts the performance on standard IQA databases. We concluded that the improved quality prediction and texture similarity performance of DISTS relative to LPIPS is due to the combination of these key modifications.


\begin{table*}
\small
  \centering  
    \caption{\added{Ablation experiments: proposed DISTS model (last line) compared to LPIPS (first line), and intermediate variations. All models trained on KADID}}
    \setlength{\tabcolsep}{0.9mm}{
    \begin{tabular}{l cccccc cccccc ccc}
    \toprule
    \multirow{3}{*}[-3pt]{Model} & \multicolumn{6}{c}{Quality prediction}&\multicolumn{6}{c}{Texture similarity}&\multicolumn{3}{c}{Geometric invariance} \\
    \cmidrule(lr){2-7} \cmidrule(lr){8-13} \cmidrule(lr){14-16}
     &  \multicolumn{3}{c}{LIVE~\cite{LIVE}} & \multicolumn{3}{c}{TID2013~\cite{Ponomarenko201557}} 
     & \multicolumn{3}{c}{SynTEX~\cite{golestaneh2015effect}} & \multicolumn{3}{c}{TQD (proposed)} &  \multicolumn{3}{c}{LIVE\_Aug} \\
     \cmidrule(lr){2-4} \cmidrule(lr){5-7} \cmidrule(lr){8-10}
     \cmidrule(lr){11-13} \cmidrule(lr){14-16}
     & PLCC & SRCC & KRCC & PLCC & SRCC & KRCC & PLCC & SRCC & KRCC & PLCC & SRCC & KRCC & PLCC & SRCC & KRCC \\ \hline
    (a) LPIPS               & 0.934 & 0.936 & 0.769 & 0.850 & 0.824 & 0.626 & 0.591 & 0.589 & 0.452 & 0.403 & 0.401 & 0.302 & 0.801 & 0.793 & 0.629\\
    (b) a + $\ell_2$ pooling& 0.937 & 0.938 & 0.770 & 0.851 & 0.824 & 0.626 & 0.594 & 0.592 & 0.459 & 0.410 & 0.406 & 0.305 & 0.807 & 0.802 & 0.633\\
    (c) b + input image     & 0.935 & 0.935 & 0.768 & 0.851 & 0.825 & 0.627 & 0.582 & 0.581 & 0.449 & 0.410 & 0.409 & 0.303 & 0.795 & 0.789& 0.625\\
    (d) c + local SSIM      & 0.950 & 0.951 & 0.797 & 0.853 & 0.828 & 0.631 & 0.738 & 0.744 & 0.602 & 0.664 & 0.667 & 0.559 & 0.798 & 0.790 & 0.626\\
    (e) c + global SSIM     & \textbf{0.955} & \textbf{0.957} & \textbf{0.816 }& \textbf{0.859} & \textbf{0.835} & \textbf{0.641} & \textbf{0.868} & \textbf{0.877} & \textbf{0.739} & \textbf{0.780} & \textbf{0.795} & \textbf{0.698} & \textbf{0.899} & \textbf{0.881} & \textbf{0.724}\\
    (f) c + $E_2$ term      & 0.934 & 0.935 & 0.768 & 0.791 & 0.776 & 0.608 & 0.780 & 0.782 & 0.630 & 0.680 & 0.685 & 0.588 & 0.830 & 0.823 & 0.655\\
    (g) d + $E_2$ term      & 0.929 & 0.931 & 0.766 & 0.801 & 0.783 & 0.615 & 0.774 & 0.778 & 0.625 & 0.672 & 0.678 & 0.579 & 0.820 & 0.816 & 0.649\\
    (h) e + $E_2$ = DISTS     & \textbf{0.954} & \textbf{0.954} & \textbf{0.811} & \textbf{0.855} & \textbf{0.830} & \textbf{0.639} &\textbf{ 0.901} & \textbf{0.923}          & \textbf{0.759} & \textbf{0.903} & \textbf{0.910} & \textbf{0.785} & \textbf{0.931} & \textbf{0.928} & \textbf{0.762}\\
    \bottomrule
  \end{tabular}}
  \label{tab:ablation} 
\end{table*}

\section{Conclusions}
We have presented a new full-reference IQA method, DISTS, which is the first of its kind with built-in tolerance to texture resampling. 
Our model unifies structure and texture similarity, providing good predictions of human quality ratings on both textures and natural photographs, is robust to mild geometric distortions, and performs well in texture classification and retrieval.

DISTS is based on the pre-trained VGG network for object recognition. By computing the global means of convolution responses at each stage, we established a universal parametric texture model similar to that of Portilla \& Simoncelli~\cite{portilla2000parametric}. These statistical measurements provide a rich but relatively low-dimensional characterization of texture appearance, as verified using synthesis (Fig. \ref{eq:texture_syn}).
Despite the empirical success, we believe an important direction for future work is to analyze this ``black box'' to understand 1) what and how certain texture features and attributes are captured by the pre-trained network, and 2) the importance of cascaded convolution and subsampled pooling in summarizing useful texture information. It is also of interest to extend the current model to measure distortions locally, as is done in SSIM. In this case, the distance measure could be reformulated to adaptively select between structure and texture measures as appropriate, instead of linearly combining them with fixed weights.

The most direct use of IQA measures is for performance assessment and comparison of image processing systems. But perhaps more importantly, they may be used to optimize image processing methods, so as to improve the visual quality of their results. In this context, most existing IQA measures present major obstacles due to the fact that they lack desired mathematical properties that aid optimization (\eg, injectivity, differentiability and convexity).
In many cases, they rely on surjective mappings, and minima are non-unique (see Fig.~\ref{fig:optim}). Although DISTS enjoys several advantageous mathematical properties, it is still highly non-convex (with abundant saddle points and plateaus), and recovery from random noise using stochastic gradient descent methods (see Fig.~\ref{fig:optim}) requires many more iterations than for SSIM. 
In practice, the larger the weight of the structure term $s$ at the zeroth stage ($\beta_{0j}$ in Eq.~(\ref{eq:s2})), the faster the optimization converges. 
However, to reach a reasonable level of texture invariance,  the learned $\sum_{i,j}\alpha_{ij}$ should be larger than $\sum_{i,j}\beta_{ij}$, \comment{reflecting that global mean terms $l$ defined in Eq. (\ref{eq:s1}) play a dominant role in our metric} hindering optimization. 
We are currently analyzing DISTS in the context of perceptual optimization. Our initial results indicate that DISTS-based optimization of image processing applications, including 
denoising, deblurring, super-resolution, and compression, can lead to noticeable improvements in visual quality~\cite{ding2020optim}.

\ifCLASSOPTIONcaptionsoff
  \newpage
\fi



%

\bibliographystyle{IEEEtran}
\end{document}